\documentclass[accepted]{uai2026} 
                        
\usepackage[american]{babel}

\usepackage{natbib} 
\usepackage{mathtools} 
\usepackage{booktabs} 
\usepackage{tikz} 

\usepackage{dsfont}
\usepackage{amsmath}
\usepackage{amssymb}
\usepackage{mathtools}
\usepackage{amsthm}
\usepackage{bbm}
\usepackage{enumitem}
\usepackage{comment}
\newtheorem{theorem}{Theorem}[section]
\newtheorem{proposition}[theorem]{Proposition}
\newtheorem{lemma}{Lemma}

\theoremstyle{definition}
\newtheorem{definition}[theorem]{Definition}
\newtheorem{assumption}[theorem]{Assumption}
\theoremstyle{remark}

\newcommand{\E}{\mathbb{E}}
\newcommand{\safe}{\text{ref}}

\usepackage{amsfonts}
\usepackage{hyperref}
\usepackage{adjustbox}
\usepackage{multirow}
\usepackage{algorithm}
\usepackage{algorithmic} 
\usepackage{cleveref}

\hypersetup{
	colorlinks=true,
	linkcolor=black,
	filecolor=blue,
	citecolor = blue,      
	urlcolor=blue,
}

\allowdisplaybreaks

\title{SteinGate: Tail-Sensitive Safe Reinforcement Learning via Stein Discrepancy}

  \author[1]{\href{mailto:yassine.chemingui@wsu.edu>?Subject=Your UAI 2026 paper}{Yassine~Chemingui\thanks{Equal contribution}}{}}
\author[1]{\href{mailto:chenhua.fan@wsu.edu>?Subject=Your UAI 2026 paper}{Chenhua~Fan$^*$}{}}
\author[1]{\href{mailto:honghao.wei@wsu.edu>?Subject=Your UAI 2026 paper}{Honghao~Wei}{}}
\author[1]{\href{mailto:jana.doppa@wsu.edu>?Subject=Your UAI 2026 paper}{Janardhan~Rao~Doppa}{}}
\affil[1]{%
    School of Electrical Engineering and Computer Science\\
    Washington State University\\
    Pullman, Washington, USA
}
  
\begin{document}
\maketitle

\begin{abstract}
Safe reinforcement learning typically enforces safety by bounding expected cumulative costs, a criterion that often fails to detect rare but catastrophic tail events. 
To overcome these limitations, this paper introduces {\em SteinGate}, a boundary-aware distributional safety certificate that replaces fragile tail fitting with a robust consistency check using Kernelized Stein Discrepancy while accounting for boundary atoms induced by clipped costs.
SteinGate evaluates whether observed policy rollout costs remain consistent with a safe reference distribution, providing a non-parametric safety certificate. This certificate is used to dynamically adapt the learning regime: favoring reward-improving policy updates when rollouts remain consistent with the safe reference and switching to recovery behavior when the cost tail deviates. Experiments on continuous-control benchmarks demonstrate that SteinGate significantly reduces both the frequency and severity of constraint violations during training while maintaining competitive returns relative to state-of-the-art baselines.
\end{abstract}

\section{Introduction}

Safe reinforcement learning (RL) aims to deploy autonomous agents in unknown environments where exploration must be restricted due to critical physical or operational constraints. In high-stakes applications such as autonomous robots, medical systems, and industrial control, the inherent trial-and-error nature of an RL agent becomes a significant liability. In these domains, the cost of a single catastrophic failure can outweigh the benefits of thousands of successful runs \citep{zhang2020cautious, yu2021reinforcement}. Consequently, the objective of Safe RL must move beyond simple reward maximization to ensure that safety violations are not merely penalized, but rigorously prevented.

Safe RL problems are formulated as Constrained Markov Decision Processes \citep{altman2021constrained} and safety is typically enforced in an average manner: the expected cumulative cost is required to be controlled below a given threshold \citep{altman2021constrained, achiam2017constrained}. While computationally convenient, these methods suffer from a fundamental ``expectation myopia''. Regulating only the average of the cost distribution allows a policy to satisfy safety limits by balancing rare, catastrophic failures against a high volume of safe episodes. This risk-masking behavior is a critical flaw; a policy may appear safe on paper while remaining dangerously prone to extreme tail events that standard safety mechanisms ignore. This ``expectation myopia'' has motivated a growing interest in risk-sensitive formulations of Safe RL, including chance constraints and coherent risk measures such as Value-at-Risk (VaR) and conditional VaR \citep{sarykalin2008value, chow2018risk}. More recently, extreme value theory has been applied to model the behavior of heavy-tailed distributions directly \citep{gao2026extremevaluepolicyoptimization, ns2024extreme}.

Despite their theoretical appeal, enforcing these constraints in practice is challenging because rare-event estimation is intrinsically unstable in online RL. Policy updates continuously shift the rollout distribution, while trajectory correlations reduce the effective sample size \citep{corrado2026onpolicy}. Moreover, episodic training, termination rules, and cost clipping often induce structural artifacts, such as mass accumulation near zero cost or at violation thresholds, that complicate classical tail modeling assumptions. As a result, direct tail estimation can exhibit high variance and brittleness, leading to oscillations between over-conservatism and catastrophic failure during training.

This paper proposes a novel and fundamentally different perspective for tail-sensitive safe RL: {\em rather than explicitly modeling the tail of the cost distribution, we certify tail risk via distributional comparison}. Our key idea is to upper-bound a tail-sensitive risk functional \citep{rockafellar2000optimization} under the current policy by comparing its rollout cost distribution to a designated safe reference distribution \citep{bellemare2017distributional}. The reference distribution specifies acceptable cost behavior under the desired safety budget and serves as the target of distributional certification. Intuitively, if the current policy’s distribution remains close to a certified safe reference, then its tail risk is provably bounded. This converts safety monitoring into a problem of measuring distributional deviation under non-stationary, sample-based conditions.

To theoretically apply this idea to Safe RL, we leverage Stein’s method, a functional-analytic tool for bounding the difference between expectations under an unknown distribution and a target reference \citep{stein1972bound, gorham2015measuring}. Specifically, we utilize the Kernelized Stein Discrepancy (KSD) \citep{liu2016kernelized} to quantify the divergence between the agent’s current rollout distribution $q_\pi$ for policy $\pi$ and a pre-certified safe baseline. KSD uniquely fits our needs for safe RL because it avoids the need for explicit density estimation. While RL policy rollouts provide an abundance of cost samples, the underlying rollout distribution is generally implicit and intractable. KSD bypasses this bottleneck by comparing empirical samples from $q_\pi$ against the score function ($\nabla \log p$) of a chosen reference distribution $p$.

By combining Stein's method with a tail-sensitive test function, we are able to derive a computable upper bound on the probability of constraint violations. This bound can serve as a rigorous distributional safety certificate, allowing for real-time monitoring of the policy's tail risk. Unlike standard expected costs, this certificate provides the agent with a formal guarantee that its behavior remains aligned with the safe reference, preventing hazardous distributional shifts before catastrophic failure occurs.

We integrate this distributional certificate into the policy optimization loop via a mechanism we call \underline{SteinGate}. Standard Lagrangian methods incorporate safety as a ``soft'' penalty, often leading to training instabilities \citep{stooke2020responsive,spoor2025empirical} or situations in which the reward objective can effectively overpower the constraint when cost penalties are not sufficiently enforced (i.e., constraint trade-offs governed by the Lagrange multiplier) \citep{spoor2025empirical}. In contrast, SteinGate eliminates this risk by using the certified upper bound as a bang-bang controller:

\begin{enumerate}
    \item {\em Reward-Maximization Mode:} Optimization proceeds only when the certificate verifies that the tail risk remains within the prescribed budget.
\item {\em Safety-Recovery Mode:} If the bound is violated, the algorithm immediately switches to prioritize cost reduction, ignoring reward gradients entirely.
\end{enumerate}

Our approach ensures that the agent never sacrifices safety for high rewards; it avoids the instability of traditional methods that rely on tuning safety penalties by balancing the trade-off between reward and cost. We further prove that, under a trust-region condition on the margined policy update, each improvement step preserves the prescribed tail-risk budget with high probability. Our contributions are summarized as follows:

\begin{itemize}
    \item {\em Boundary-Aware Hybrid Stein Certificate:} We derive a tail-risk upper bound for normalized clipped cost distributions in Safe RL. Our framework introduces a hybrid discrete-continuous reference model and split discrepancy that explicitly accounts for boundary atoms induced by cost clipping, enabling rigorous distributional certification without parametric tail fitting.

    \item {\em SteinGate Mechanism and Safety Guarantees:} We integrate this certificate into a lexicographic switching controller that enforces hard safety priority. We provide concentration results for the empirical certificate and establish a trust-region condition under which policy updates preserve the prescribed risk budget with high probability.

    \item {\em Empirical Stability and Performance:} On continuous-control benchmarks, SteinGate significantly reduces violation rates and improves training stability compared to expectation-based and tail-modeling baselines, while maintaining competitive reward performance. Ablations further show stable behavior across cost thresholds, sample sizes, and reasonable reference choices.
\end{itemize}

\section{Problem Setup}
\label{sec:setup}

Safe RL problems are typically formulated as a finite-horizon Constrained Markov Decision Process (CMDP). A CMDP is  defined by the tuple $\mathcal{M}=(\mathcal{S},\mathcal{A},P,r,c,H,\mu_0)$, where $\mathcal{S}$ and $\mathcal{A}$ represent the state and action spaces, $P(\cdot\vert s,a)$ denotes the stochastic transition kernel, $r(s,a)$ is the reward function, $c(s,a)\geq 0$ is the cost function, $H$ is the horizon, and $\mu_0$ is the initial state distribution. For a stochastic policy $\pi(\cdot\mid s)$, we consider the induced trajectory $\tau=(s_0, a_0, \dots, s_{H-1}, a_{H-1}, s_H)$, where $s_0 \sim \mu_0$, $a_t \sim \pi(\cdot\vert s_t)$, and $s_{t+1} \sim P(\cdot\mid s_t, a_t)$. The expected cumulative reward, or reward value function, is defined as:
\begin{align}V_R^\pi(s) = \mathbb{E}_{\pi} \left[ \sum_{t=0}^{H-1} r(s_t, a_t) \vert s_0=s \right].\end{align}
The cumulative episode cost for a specific trajectory $\tau$ is:
\begin{align}
    C(\tau)\;\coloneqq\;\sum_{t=0}^{H-1} c(s_t,a_t).
\end{align}
Given a predefined cost limit $d > 0$, the standard CMDP objective is to maximize the expected cumulative reward subject to an expectation-based safety/cost constraint:
\begin{equation}
\max_{\pi \in \Pi} \mathbb{E}_{s\sim\mu_0} [V_R^\pi(s)] \quad \text{s.t.} \quad \mathbb{E}_{\tau \sim \pi}[ C(\tau) ] \leq d,
\label{eq:cmdp}
\end{equation}
where $\Pi$ denotes the set of feasible policies.

\noindent{\bf Beyond Expected Costs: Tail-Sensitive Constraints.}
In many safety-critical applications, the common constraint on expected cost is insufficient. This is because expectation formulation allows a policy to meet safety limits on average by balancing rare, catastrophic failures against a large number of safe runs. 
To prevent these unacceptable risks, we define the policy risk $C_h^\pi(\mu_0)$ of a policy $\pi$ as the expected value of a smooth surrogate function $h$:
\begin{equation}
C_h^\pi (\mu_0) \coloneqq \mathbb{E}_{\tau\sim\pi} [ h(C(\tau))] = \mathbb{E}_{\pi}\left[ h\left(\sum_{t=0}^{H-1} c(s_t,a_t)\right)\right].
\label{eq:risk}
\end{equation}
We denote $C_h^\pi (\mu_0)$ as $C^\pi_h$ and $\mathbb{E}_{\tau\sim\pi} [ h(C(\tau))]$ as $\mathbb{E}_{C\sim q_\pi} [ h(C)],$ where $q_\pi$ is the distribution of cumulative costs of trajectories induced by a policy $\pi$ to simplify the notation.  In the following sections, for a given policy, we also use $C$ to denote $C(\tau)$, which is a random variable without ambiguity. The surrogate function $h$ here serves as a risk-shaping operator that transforms the cumulative costs into a specific safety metric and provides the flexibility to define different safety objectives. Specifically, $h$ can be implemented as a soft threshold to serve as a smooth proxy for violation probability \citep{cao2020sigmoidal}, or as a barrier penalty (e.g., an exponential or power function) to heavily discourage trajectories that approach high-cost regions \citep{mihatsch2002risk}. It can also be modeled as a survival function to map cost accumulation to a bounded risk of failure \citep{wang2023enforcing}. Ultimately, this flexibility allows the risk functional $C_h^\pi$ to target specific portions of the cost distribution, providing a more precise safety signal than the standard expected cost safety.

Therefore, in this paper, we consider a more general constrained RL objective:
\begin{equation}
\label{eq:risk-dp}
\max_{\pi \in \Pi} \mathbb{E}_{s\sim\mu_0} [V_R^\pi(s)] \quad \text{s.t.}\quad C_h^\pi(\mu_0)\leq \varepsilon.
\end{equation}

\section{Stein-Based Distributional Certification}

When solving the optimization problem in \Cref{eq:risk-dp}, directly estimating the risk function $C_h^\pi$ through Monte Carlo rollouts is often unreliable, particularly in rare-event regimes. When safety violations are infrequent, the empirical estimate of $\mathbb{E}_{C\sim q_\pi} [ h(C)]$ suffers from high variance. This instability is further aggravated by the non-stationary distributional shifts that occur during online policy updating, making the estimate too brittle for reliable safety guarantees. Below, we provide the necessary background on Stein's method and describe how to apply it to address this challenge.

\noindent {\bf Stein's Method and Stein Discrepancies.}
Quantifying the discrepancy between a policy's rollout distribution $q_\pi$ and a reference $p$ is inherently difficult, as both distributions are typically implicit and lack a closed-form density. Stein's method \citep{stein1972bound} provides a principled way to compare a (generally implicit) rollout distribution $q_\pi$ to a reference distribution $p$ by characterizing $p$ through a linear operator. Let $p$ be a target distribution on an open set $\mathcal{C}\subseteq\mathbb{R}^d$ with continuously differentiable density and score function $s_p(x)\coloneqq\nabla_c \log p(x)$. A \emph{Stein operator} $\mathcal{A}_p$ is any linear operator satisfying the Stein identity
 $\mathbb{E}_{C\sim p}[(\mathcal{A}_p f)(C)]=0$ for all functions $f$ in a suitable class where $C$ is a random variable. A canonical choice for absolutely continuous targets is the Langevin--Stein operator \citep{gorham2015measuring}
\begin{equation}
(\mathcal{A}_p f)(c)
\;:=\;
\langle s_p(c), f(c)\rangle
\;+\;
\nabla\!\cdot f(c),
\label{eq:stein-op}
\end{equation}
where $f:\mathbb{R}^d\to\mathbb{R}^d$ is vector-valued and $\nabla\!\cdot f$ denotes the divergence \citep{barbour1990stein}. For any given test function $h$ in an exceedance function class $\mathcal{H}$, the Stein equation seeks a solution $f_h$ such that: 
\begin{equation}
(\mathcal{A}_p f_h)(C) = h(C)-\mathbb{E}_{C\sim p}[h(C)].
\label{eq:stein-eq}
\end{equation}
Thus, we are able to obtain a Stein-type upper bound in the following lemma, which establishes a core {\em safety certificate.} It allows us to bound the unknown risk by measuring how much the current policy drifts from a given baseline {\em without} knowing the rollout distribution $q_\pi$.

\begin{lemma}\label{lem:interior_stein_domination}
    Let $p$ be a target distribution and $q$ be a proposal distribution on $\mathcal{C}$. Let $h: \mathcal{C} \to \mathbb{R}$ be a test function of interest. Suppose there exists a solution $f_h$ to the Stein Equation $\mathcal{A}_p f_h(C) = h(C) - \mathbb{E}_{C \sim p}[h(C)]$ such that $f_h \in \mathcal{F}$ and its norm is bounded by a constant $B(h,p)$, i.e., $\|f_h\|_{\mathcal{F}} \le B(h,p)$. Then, the expectation of $h$ under $q_\pi$ is bounded by:
    \begin{align}
        \mathbb{E}_{C \sim q_\pi}[h(C)] \;\le\; \mathbb{E}_{C \sim p}[h(C)] + B(h,p) \cdot SD(q_\pi\|p),\label{eq:lem-stein-bound}
    \end{align}
    where $SD(q_\pi\|p):=\underset{f_h\in\mathcal{F},\Vert f_h\Vert_\mathcal{F}\leq 1}{\sup}
\left\vert\mathbb{E}_{C\sim q_\pi}\big[(\mathcal{A}_p f_h)(C)\big]\right\vert.$ 
\end{lemma}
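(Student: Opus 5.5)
The plan is to evaluate the Stein equation at the random variable $C\sim q_\pi$ and take expectations. Since $f_h$ solves $\mathcal{A}_p f_h(c) = h(c) - \mathbb{E}_{C\sim p}[h(C)]$ pointwise on $\mathcal{C}$, integrating against $q_\pi$ gives the identity
\begin{align*}
\mathbb{E}_{C\sim q_\pi}[h(C)] - \mathbb{E}_{C\sim p}[h(C)] \;=\; \mathbb{E}_{C\sim q_\pi}\big[(\mathcal{A}_p f_h)(C)\big].
\end{align*}
For this step we need $h$ and $\mathcal{A}_p f_h$ to be $q_\pi$-integrable. This is implicit in the assumption that $f_h\in\mathcal{F}$: membership in the Stein class presupposes that the operator's output is integrable under the distributions being compared. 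If $q_\pi$ places mass where the score is undefined, such as on boundary atoms, we restrict attention to the open set $\mathcal{C}$ as the statement does.

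Next we normalize. If $B(h,p)=0$, then $f_h=0$ in $\mathcal{F}$, so $\mathcal{A}_p f_h \equiv 0$ by linearity, and the inequality holds trivially; here we use that $\|\cdot\|_{\mathcal{F}}$ is a genuine norm. Otherwise, set $g = f_h / B(h,p)$. Then $\|g\|_{\mathcal{F}}\le 1$, and linearity of $\mathcal{A}_p$ gives $\mathcal{A}_p f_h = B(h,p)\,\mathcal{A}_p g$. Since $\mathcal{F}$ is a normed linear space, it is closed under scaling, so $g$ is admissible in the supremum that defines $SD(q_\pi\|p)$. This yields
\begin{align*}
\mathbb{E}_{C\sim q_\pi}\big[(\mathcal{A}_p f_h)(C)\big] \le B(h,p)\,\big|\mathbb{E}_{C\sim q_\pi}[(\mathcal{A}_p g)(C)]\big| \le B(h,p)\, SD(q_\pi\|p).
\end{align*}
Combining this with the identity from the first step gives \eqref{eq:lem-stein-bound}. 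The argument actually establishes the two-sided bound $|\mathbb{E}_{q_\pi}h-\mathbb{E}_p h|\le B(h,p)\,SD(q_\pi\|p)$, of which the stated inequality is the upper half.

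The main obstacle is not the algebra but the justification of the first step: we must make sure that the pointwise Stein equation can legitimately be integrated under $q_\pi$. Concretely, this requires $\mathcal{A}_p f_h\in L^1(q_\pi)$ and requires that $q_\pi$ be supported in the domain where the equation holds. I would state this integrability explicitly as part of the meaning of ``$f_h\in\mathcal{F}$''. Note that the Stein identity $\mathbb{E}_p[\mathcal{A}_p f]=0$ is not needed in this direction; it only ensures that $SD(p\|p)=0$, which makes the bound informative.
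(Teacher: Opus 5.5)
Your proposal is correct and follows essentially the same route as the paper: integrate the Stein equation against $q_\pi$ to obtain $\mathbb{E}_{q_\pi}[h]-\mathbb{E}_p[h]=\mathbb{E}_{q_\pi}[\mathcal{A}_p f_h]$, rescale $f_h$ by $B(h,p)$ into the unit ball of $\mathcal{F}$, and bound by the supremum defining $SD(q_\pi\|p)$. Your handling of the case $B(h,p)=0$ and your explicit $L^1(q_\pi)$ integrability caveat are minor refinements that the paper leaves implicit; in particular, the paper divides by $B(h,p)$ without comment.
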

We can observe that this certificate is one-sided and monotone: as the policy drifts further from the safe reference (increasing the discrepancy), the upper bound on risk grows accordingly. The proof is in \Cref{ap:proof_of_stein_certi} in the Appendix.

\noindent {\bf Applying Stein Certificate in Safe RL.}
The Stein certificate in \Cref{eq:lem-stein-bound} provides a rigorous approach for applying Stein's method to solve Safe RL problems, specifically for controlling hazardous distributional shifts, a severe issue where an evolving online policy drifts into unsafe cost regimes \citep{achiam2017constrained}. In complex environments, standard monitoring tools are often impractical because the rollout distribution $q_\pi$ induced by the current policy $\pi$ is implicit. While we can generate cost samples via simulation, we lack an explicit formula for its density. Consequently, traditional metrics such as the Kullback-Leibler (KL) divergence become computationally expensive or unstable, as they typically require density ratios or auxiliary density estimation. Stein’s method bypasses these bottlenecks. It only requires the score function ($\nabla \log p$) of the designated safe reference $p$ and empirical samples from the current policy. By leveraging this property, we establish a differentiable and sample-efficient mechanism for constraining policy behavior. This approach ensures that the agent's cost distribution remains aligned with safety requirements without ever requiring the calculation of normalizing constants or explicit probability densities.

\noindent {\bf Practical Instantiation via Kernelized Stein Discrepancy.}
While the Stein certificate is theoretically sound, the supremum in Lemma \ref{lem:interior_stein_domination} is generally intractable. To obtain a computable metric, we restrict $\mathcal{F}$ to the unit ball of a Reproducing Kernel Hilbert Space (RKHS) with base kernel $k(\cdot,\cdot)$. This yields the Kernelized Stein Discrepancy (KSD), defined in its squared population form as:
$
\mathrm{KSD}^2(q_\pi\|p) =
\mathbb{E}_{C,C'\sim q_\pi}\big[u_p(C,C')\big],
$
where $u_p(\cdot,\cdot)$ is the induced Stein kernel determined by $p$ and $k$. In practice, given $m$ rollout samples $\{c_i\} \sim q_\pi$, we employ the consistent V-statistic estimator \citep{chwialkowski2016kernel}:
$\widehat{\mathrm{KSD}}^2_V :=\frac{1}{m^2}\sum_{i=1}^m\sum_{j=1}^m u_p(c_i,c_j)$. By monitoring this discrepancy from a safe reference $p$, we can upper-bound tail-sensitive risk {\em without} requiring parametric modeling or explicit density estimation.

It is useful to distinguish the role of Stein discrepancy from the source of tail sensitivity in our framework. The Stein discrepancy is a general distributional discrepancy: it measures how far the rollout-induced cost distribution deviates from a designated reference distribution through samples from ($q_\pi$) and the score function of ($p$). Its main computational advantage in online RL is that it avoids both density estimation for the implicit rollout distribution and parametric fitting of rare exceedance events, which are unstable under non-stationary policy updates. Tail sensitivity enters through the certificate in Lemma 1 by choosing a risk-shaping test function (h) that emphasizes high-cost events and by comparing rollouts to a reference distribution that encodes admissible safe behavior. Thus, KSD provides a tractable mechanism for detecting distributional drift, while the safety semantics of the certificate are determined by the choice of (h), the reference distribution, and the boundary-aware treatment of clipped costs described in the next section.

\section{SteinGate Framework}

While the Stein certificate provides a principled upper bound on tail risk, its application to RL must account for the compact support of cost distributions. In Safe RL problems, cumulative costs are typically bounded, often exhibiting significant probability mass at the boundaries, such as clusters of zero-cost ``perfect'' episodes. More importantly, safety specifications are almost always defined relative to a fixed threshold $d$. This creates a saturation of risk: once the budget is exceeded, the exact magnitude of the cost becomes secondary to the fact of the violation itself.

To align our mathematical framework with this reality, we utilize a threshold-normalized cost representation. By clipping and scaling costs to the interval $[0, 1]$, we ensure that all budget exceedances are treated as uniformly catastrophic events. This mapping ensures that the distributional certificate remains practically relevant by focusing the ``sensitivity'' of the Stein operator on the region where the budget is consumed, while maintaining the mathematical consistency required for a bounded function class.

\subsection{Boundary Atoms}

To encode budget exceedances, we define the normalized clipped cumulative cost $ \tilde{C}(\tau) = \min\{C(\tau)/d,\,1\}$, then $\tilde{C}(\tau)\in[0,1]$ normalized and the boundary value $\tilde{C}(\tau)=1$ aggregates all budget violations into a single event. While this representation is ideal for defining a stable reference on $[0,1]$, it creates a structural mismatch with standard Stein machinery. Specifically, the rollout distribution of $\tilde{C}(\tau)$ is often a {\em mixed} distribution. It frequently contains discrete probability mass (atoms) at the boundaries, a ``safe'' atom at $\tilde{C}(\tau)=0$ and a ``violation'' atom at $\tilde{C}(\tau)=1$ with a continuous component on the interior $(0,1)$ as shown in Figure \ref{fig:boundary_atoms}. This structure conflicts with the standard Langevin-Stein operator, which assumes a smooth density on an open domain.
\begin{figure}
    \centering
    \includegraphics[width=\linewidth, height=3.8cm, ]{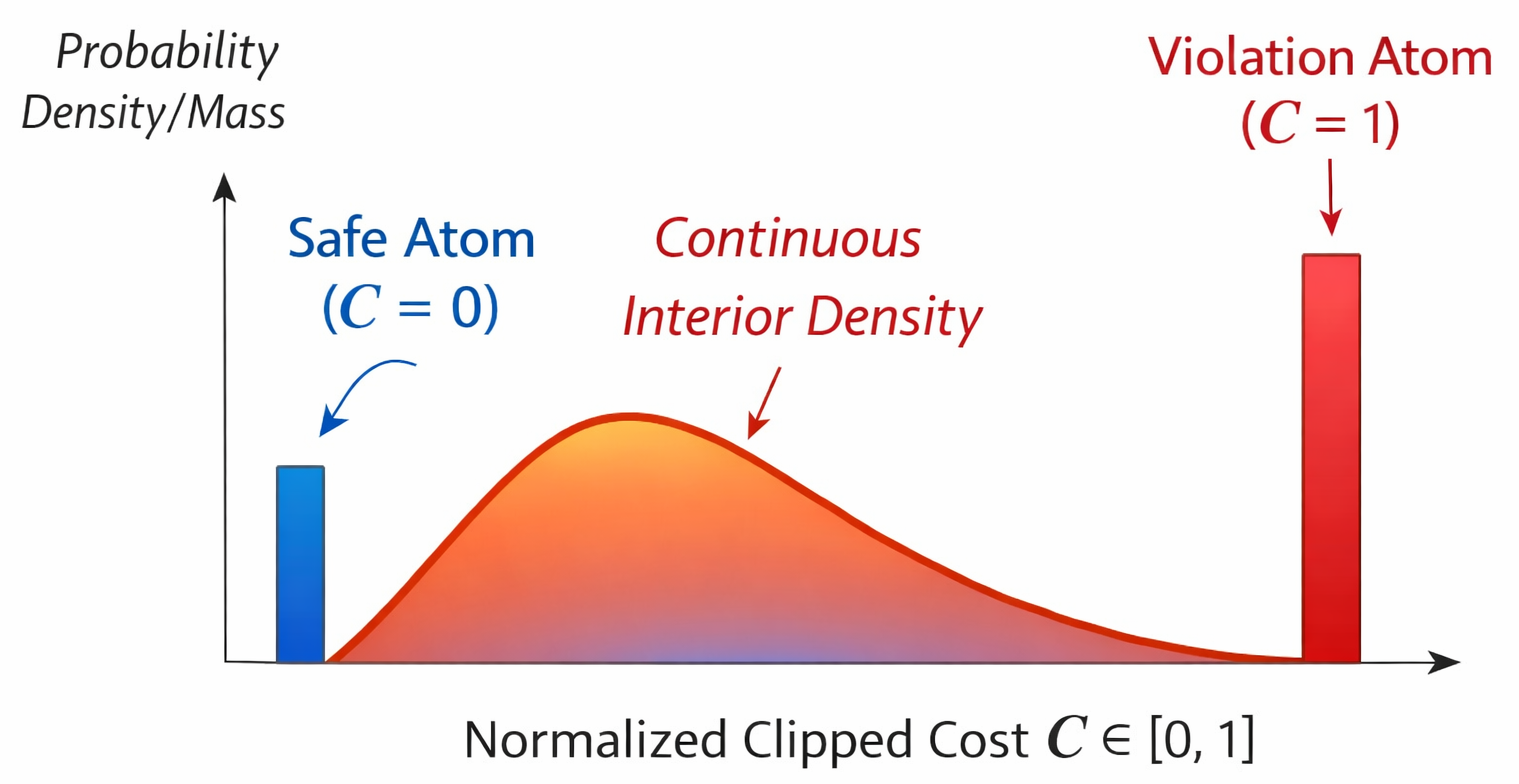}
    \caption{Hybrid cost distribution over normalized clipped cumulative cost $C \in [0,1]$: a safe atom at $C=0$, a violation atom at $C=1$, and a continuous interior density on $(0,1)$.}
    \label{fig:boundary_atoms}
    
\end{figure}
The threshold-normalized setting presents two challenges. {\bf 1) Boundary Atoms:} Stein identities rely on integration by parts, which requires boundary terms to vanish. However, clipping induces discrete probability mass (atoms) at $\{0, 1\}$, which standard continuous operators fail to capture with spikes at the boundaries. {\bf 2) Endpoint Singularity:} Score functions for common interior references (e.g., Beta distributions) often become singular near the boundaries. This can amplify variance and cause numerical instability in KSD estimators. To resolve these issues, we propose a hybrid Stein operator that explicitly accounts for boundary mass while maintaining the stability of the interior reference.

\subsection{Hybrid Reference and Discrepancy}
\label{subsec:hybrid_ref}
\paragraph{Safe reference.}
To align with the mixed structure of threshold-normalized and clipped costs, we model admissible behavior using a hybrid reference distribution that explicitly accounts for boundary atoms:
\begin{equation}
    p_{\text{ref}}
    =
    a_0^\star\,\delta_{0}
    +
    a_1^\star\,\delta_{1}
    +
    \bigl(1-a_0^\star-a_1^\star\bigr)\,p_{\mathrm{int}},
    \label{eq:psafe}
\end{equation}
where $\delta_0,\delta_1$ denote point masses at the boundaries, $a_0^\star,a_1^\star\in[0,1]$ are the reference boundary masses, and $p_{\mathrm{int}}$ is a smooth interior reference density on $(0,1)$ (e.g., a Beta distribution).
The reference distribution $p_{\mathrm{ref}}$ should be interpreted as a safety specification rather than as an estimate of the unknown cost distribution induced by an optimal safe policy. The optimal safe rollout distribution depends on the environment dynamics and the learned policy, and is generally not available a priori. The role of $p_{\mathrm{ref}}$ is instead to encode admissible behavior in the normalized cost space. The boundary masses $a_0^\star$ and $a_1^\star$ specify acceptable mass at zero cost and at the violation boundary, while the interior density $p_{\mathrm{int}}$ specifies the desired shape of costs that remain below the threshold.

For any distribution $q$ on $[0,1]$, we define the empirical boundary masses and the total interior weight as:
$$
a_0(q) \coloneqq \Pr(C=0\vert C\sim q),
a_1(q) \coloneqq \Pr(C=1\vert C\sim q), 
$$
Let $q_{\mathrm{int}}$ denote the conditional distribution of $q$ on the interior $(0,1)$. Furthermore, let $w(q) \coloneqq 1 - a_0(q) - a_1(q).$ This decomposition allows us to analyze the safety of a policy by separately evaluating its performance at the boundaries and its behavior within the threshold.
The reference need not be exactly realizable by every MDP. If the specification is overly restrictive or structurally misaligned with the achievable cost distributions, this appears as a persistent discrepancy between $q$ and $p_{\mathrm{ref}}$. From the perspective of the certificate, such mismatch is indistinguishable from unsafe distributional behavior: both indicate that the observed rollout distribution does not satisfy the prescribed safety specification. This is intentional, since the certificate is defined relative to the user-provided reference rather than to an unknown optimal distribution. In practice, the choice of $p_{\mathrm{ref}}$ controls conservatism: stricter references induce larger discrepancies for the same rollout distribution, while more relaxed references admit a wider range of interior cost.

\paragraph{Discrepancy with split penalties.}
To address the mixed structure of cost distributions, we define a hybrid discrepancy that separately manages boundary mass shifts and interior distribution mismatches:
\begin{equation}
\begin{aligned}
&\mathrm{SD}_{\mathrm{tot}}(q\|p_{\mathrm{ref}})
\coloneqq \lambda_{\mathrm{disc}}\,
\mathrm{SD}_{\mathrm{disc}}(q\|p_{\mathrm{ref}}) \\
&\quad\quad\quad\quad\quad+ \lambda_{\mathrm{int}}\, w(q)\,
\mathrm{SD}_{\mathrm{int}}(q_{\mathrm{int}}\|p_{\mathrm{int}}) \\
\le &\
\lambda \ \Big(
\mathrm{SD}_{\mathrm{disc}}(q\|p_{\mathrm{ref}}) 
+ w(q)\cdot\mathrm{SD}_{\mathrm{int}}(q_{\mathrm{int}}\|p_{\mathrm{int}})
\Big),
\end{aligned}
\label{eq:sd-tot}
\end{equation}
where $\lambda\coloneqq\max\{\lambda_{\mathrm{disc}},\lambda_{\mathrm{int}}\} >0 .$ The discrete term $\mathrm{SD}_{\mathrm{disc}}(q\|p_{\mathrm{ref}})$ can be written as $\mathrm{SD}_{\mathrm{disc}}(q\|p_{\mathrm{ref}})
    :=\bigl(a_0^\star-a_0(q)\bigr)_+
    +
    \bigl(a_1(q)-a_1^\star\bigr)_+,$
where $(x)_+=\max(x,0)$. This term is one-sided: it penalizes excess violation mass ($a_1(q)>a_1^\star$) and loss of safe mass ($a_0(q)<a_0^\star$), while shifts in the harmless direction, fewer violations or more zero-cost episodes than the reference, incur no penalty. The interior term $\mathrm{SD}_{\mathrm{int}}$ is a standard Stein discrepancy relative to $p_{\mathrm{int}}$ and can be computed on the interior conditional $q_{\mathrm{int}}$ using an appropriate Stein operator for $p_{\mathrm{int}}$ (e.g., the Langevin--Stein operator on a clipped interior).

Together with the Stein certificate of Lemma~\ref{lem:interior_stein_domination}, the hybrid discrepancy in~\eqref{eq:sd-tot} yields a certificate that is sensitive to safety-critical violation mass at the boundary while still controlling continuous interior mismatch.

\subsection {SteinGate: A Bang-bang Controller}

We are now ready to introduce our empirical hybrid certificate to govern policy optimization. For a given reference $p_{\mathrm{ref}}$ and a parametrized policy $\pi_\theta$, we define the empirical Stein certificate as:
\begin{equation}
U(\theta)
=
\mathbb{E}_{C\sim p_{\mathrm{ref}}}[h(C)]
+
\lambda\, \widehat{\mathrm{SD}}_{\mathrm{tot}}(q_{\pi_\theta} \| p_{\mathrm{ref}}),
\label{eq:hybrid-certif}
\end{equation}
where the total discrepancy is estimated from the most recent rollout samples.

Instead of treating safety as a soft penalty, we implement this certificate as a bang-bang controller compared against a safety threshold $\varepsilon$. The agent's behavior is determined by a strict switching rule:
\begin{equation}
\text{Mode}(\theta)
=
\begin{cases}
\text{reward-maximization}, & \text{if } U(\theta) \le \varepsilon, \\
\text{safety-recovery}, & \text{if } U(\theta) > \varepsilon.
\end{cases}
\end{equation}

The scalar $\lambda$ in $U(\theta)$ does not play the same role as a Lagrange multiplier in constrained policy optimization. In Lagrangian methods, the multiplier weights the cost term inside the optimization objective, so reward and safety gradients are combined in a single update direction. In SteinGate, by contrast, $\lambda$ only scales the distributional certificate used by the gate. Once the mode is selected, the policy update is either reward-driven or recovery-driven; the two gradients are not linearly combined. Thus, $\lambda$ controls the conservativeness of the safety test, and hence the point at which switching occurs, rather than mediating a continuous trade-off between reward maximization and constraint satisfaction.

The same gating interpretation also determines how reference misspecification is handled algorithmically. If the certificate remains above the threshold across successive iterations, this may indicate either genuinely unsafe rollout behavior or a reference specification that is too restrictive for the environment. The controller treats both cases conservatively, since both correspond to failure to satisfy the prescribed distributional safety certificate. Persistent selection of the recovery mode can therefore serve as a practical diagnostic for reference misspecification, suggesting that boundary masses or the interior reference require relaxation.

This framework, summarized in Algorithm~\ref{alg:steingate}, is what we refer to as ``SteinGate.'' This design provides two critical technical advantages over standard Lagrangian penalty approaches. First, Lagrangian approaches optimize a linear combination of objectives. 
In this setting, the relative weighting between reward and cost is governed by a learned multiplier, and poorly tuned or unstable updates can cause the agent to prioritize reward performance while violating safety constraints, since the cost term is only enforced in expectation.
SteinGate establishes strict lexicographic priority: when the rollout distribution drifts into the unsafe tail region, the reward gradient is ignored entirely, forcing the agent to prioritize recovery. Second, it circumvents the well-known instability of dual-variable tuning, replacing oscillatory dual ascent updates with a robust, boundary-aware distributional check. 
This design effectively stabilizes the learning process, as evidenced by the empirical evaluations in Section~\ref{subsec:results}.

Although the controller uses a discontinuous switching rule, SteinGate does not switch on instantaneous costs. The gate is evaluated using a batch-level distributional certificate computed from recent rollouts, so isolated high-cost trajectories are averaged through the empirical discrepancy rather than immediately changing the update mode. Moreover, the trust-region update used by the backbone optimizer limits the per-iteration change in the policy, which in turn limits abrupt changes in the induced cost distribution. The two modes are also asymmetric: reward updates are allowed only while the certificate remains below the prescribed budget, whereas recovery updates explicitly act to reduce the cost violation once the budget is exceeded. Together, these features restrict switching signal evolution to the distributional scale of policy updates rather than individual trajectory noise.

\begin{algorithm}[t]
\caption{The SteinGate Framework}
\label{alg:steingate}
\begin{algorithmic}[1]
\REQUIRE Policy $\pi_{\theta}$; backbone optimizer $\textsc{OptimizerStep}(\cdot)$; cost samples $\{c_i\}_{i=1}^{n} \sim \pi_{\theta}$; risk function $h(\cdot)$; safe reference $p_{\text{ref}}$; risk budget $\varepsilon$; weight $\lambda \ge 0$
\ENSURE Updated policy $\theta_{new}$

\item[] \textbf{Phase 1: Certification (The Gate)}
\STATE \textit{Reference risk:} $R_p \leftarrow \mathbb{E}_{p_{\text{ref}}}[h(C)]$
\STATE \textit{Estimate discrepancy:} $D \leftarrow \widehat{\mathrm{SD}}(\{c_i\} \mid p_{\text{ref}})$
\STATE \textit{Compute certificate:} $U \leftarrow R_p + \lambda \, D$
\STATE $\textsc{IsSafe} \leftarrow (U \le \varepsilon)$

\item[] \textbf{Phase 2: Policy Optimization (Control)}
\IF{\textsc{IsSafe}}
    \STATE Set $\mathsf{mode}\leftarrow$ \textsc{RewardOpt} 
    \hfill \COMMENT{\textcolor{teal}{Certificate holds}}
\ELSE
    \STATE Set $\mathsf{mode}\leftarrow$ \textsc{CostOpt} 
    \hfill \COMMENT{\textcolor{purple}{Safety recovery}}
\ENDIF
\STATE $\theta \leftarrow \textsc{OptimizerStep}(\theta;\mathsf{mode})$

\end{algorithmic}
\end{algorithm}

\subsection{Safety Guarantee for SteinGate}
Before presenting our main results, we first make the following assumption on the empirical certificate error.
\begin{assumption}[Bounded Stein kernel on the clipped interior]
\label{ass:bounded-kernel-1}
After interior clipping, the induced Stein kernel $u_{p_{\mathrm{int}}}(x,y)$
used by the KSD estimator satisfies $|u_{p_{\mathrm{int}}}(x,y)|\le M_u$ for all clipped $x,y$.
\end{assumption}

\begin{proposition}[Concentration of the empirical certificate]
\label{prop:certificate-concentration}
Assume Assumption~\ref{ass:bounded-kernel-1} holds. For any given parametrized policy $\pi_\theta$ and a reference distribution $p_{\mathrm{ref}},$ denote the Stein certificate as: 
$
U_{\textrm{stein}}(\theta)=\mathbb{E}_{p_{\mathrm{ref}}}[h(C)]
+
\lambda\, {\mathrm{SD}}(q_\theta \| p_{\mathrm{ref}}),
$ and the empirical Stein certificate is defined in \Cref{eq:hybrid-certif}.
Given $n$ rollout samples $\{c_i\}\sim q_\theta$, and $m$ interior samples,  
for any $\delta\in(0,1)$, with probability at least $1-\delta$,
\begin{equation*}
\begin{aligned}
\big|{U}(\theta)
- U_{\textrm{stein}}(\theta)\big|
\le \mathcal{O}\!\left( \sqrt{\frac{1}{m}} + \left(1+\mathrm{SD}_{\mathrm{int}}\right)\sqrt{\frac{1}{n}} \right)
\end{aligned}
\end{equation*}
\end{proposition}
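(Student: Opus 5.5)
The reference risk $\mathbb{E}_{p_{\mathrm{ref}}}[h(C)]$ appears in both $U(\theta)$ and $U_{\textrm{stein}}(\theta)$, so it cancels. The whole task is therefore to bound $\lambda\,|\widehat{\mathrm{SD}}_{\mathrm{tot}}-\mathrm{SD}_{\mathrm{tot}}|$, where the discrepancy is the hybrid split in \eqref{eq:sd-tot}. Absorbing $\lambda$ into the constant, we use the triangle inequality to split the error into a discrete (boundary) part and an interior part:
\[
|\widehat{\mathrm{SD}}_{\mathrm{tot}}-\mathrm{SD}_{\mathrm{tot}}|
\le |\widehat{\mathrm{SD}}_{\mathrm{disc}}-\mathrm{SD}_{\mathrm{disc}}|
+ \bigl|\hat w\,\widehat{\mathrm{SD}}_{\mathrm{int}} - w\,\mathrm{SD}_{\mathrm{int}}\bigr| .
\]
Each part is then controlled on an event of probability at least $1-\delta/3$, and the three events are combined with a union bound.

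\textbf{Discrete part.} The empirical masses $\hat a_0,\hat a_1$ are means of $n$ Bernoulli indicators, namely $\mathbf{1}\{c_i=0\}$ and $\mathbf{1}\{c_i=1\}$. Hoeffding's inequality gives $|\hat a_j-a_j|\le\sqrt{\log(6/\delta)/(2n)}$ for each $j$. The map $x\mapsto(x)_+$ is $1$-Lipschitz, so $|\widehat{\mathrm{SD}}_{\mathrm{disc}}-\mathrm{SD}_{\mathrm{disc}}|\le|\hat a_0-a_0|+|\hat a_1-a_1|=\mathcal{O}(\sqrt{1/n})$. The same event also gives $|\hat w-w|=\mathcal{O}(\sqrt{1/n})$, since $w=1-a_0-a_1$.

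\textbf{Interior part.} We decompose
\[
\hat w\,\widehat{\mathrm{SD}}_{\mathrm{int}}-w\,\mathrm{SD}_{\mathrm{int}}=(\hat w-w)\,\mathrm{SD}_{\mathrm{int}}+\hat w\,(\widehat{\mathrm{SD}}_{\mathrm{int}}-\mathrm{SD}_{\mathrm{int}}).
\]
The first term is $\mathcal{O}(\mathrm{SD}_{\mathrm{int}}\sqrt{1/n})$ on the Hoeffding event above, and this is the source of the factor $(1+\mathrm{SD}_{\mathrm{int}})$ in the statement. For the second term we use $\hat w\le 1$ and bound $|\widehat{\mathrm{KSD}}_V-\mathrm{KSD}|$ from the $m$ interior samples.

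\textbf{Main obstacle: concentration of the interior KSD.} Here we use Assumption~\ref{ass:bounded-kernel-1}. The V-statistic $\widehat{\mathrm{KSD}}^2_V=\frac{1}{m^2}\sum_{i,j}u_{p_{\mathrm{int}}}(c_i,c_j)$ equals $\|\frac1m\sum_i\xi(c_i)\|^2_{\mathcal{H}^d}$, where $\xi(c)=\mathcal{A}_{p_{\mathrm{int}}}k(c,\cdot)$ is the Stein feature map, and $\|\xi(c)\|^2=u(c,c)\le M_u$. Working with the norm $\widehat{\mathrm{KSD}}_V=\|\bar\xi\|$ rather than its square avoids any division by a possibly small $\mathrm{KSD}$. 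Changing a single sample moves $\bar\xi$ by at most $2\sqrt{M_u}/m$, so McDiarmid's inequality applies. For the bias we have $\mathbb{E}\|\bar\xi-\mathbb{E}\xi\|\le\sqrt{M_u/m}$, using Jensen's inequality and the independence of the features. Combining these gives $|\widehat{\mathrm{KSD}}_V-\mathrm{KSD}|\le\sqrt{M_u/m}\,(1+\sqrt{2\log(3/\delta)})$.

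One subtlety is that the $m$ interior samples are those $c_i\in(0,1)$. We would either condition on their number, or treat them as $m$ i.i.d.\ draws from $q_{\mathrm{int}}$, as the statement implicitly does. Conditionally on the count, the samples are i.i.d.\ from $q_{\mathrm{int}}$, so the bound holds conditionally and hence unconditionally. Clipping guarantees that all evaluated points lie in the region where the kernel bound holds.

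Collecting the three events, with probability at least $1-\delta$ the total error is at most $\lambda\bigl[C_1\sqrt{1/m}+C_2(1+\mathrm{SD}_{\mathrm{int}})\sqrt{1/n}\bigr]$, with $\log(1/\delta)$ and $M_u$ absorbed into the constants $C_1,C_2$. This is the claimed bound.
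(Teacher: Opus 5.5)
Your proof is correct and shares the paper's skeleton. Both cancel the reference risk, apply Hoeffding to the two atom masses with the $1$-Lipschitz positive part, use the split $(\hat w-w)\mathrm{SD}_{\mathrm{int}}+\hat w(\widehat{\mathrm{SD}}_{\mathrm{int}}-\mathrm{SD}_{\mathrm{int}})$ with $\hat w\le 1$, and finish with a union bound over three $\delta/3$ events.

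The real difference is the interior step. The paper defines $\widehat{\mathrm{SD}}_{\mathrm{int}}:=\widehat{\mathrm{KSD}}^2_V$ and applies McDiarmid to the squared V-statistic itself. Changing one sample touches at most $2m-1$ summands, so the bounded-difference constants are $4M_u/m$. This needs only $|u_{p_{\mathrm{int}}}|\le M_u$. However, it concentrates around $\mathbb{E}[\widehat{\mathrm{SD}}_{\mathrm{int}}]$, which differs from the population $\mathrm{KSD}^2$ by the diagonal bias $\frac1m\bigl(\mathbb{E}u(C,C)-\mathbb{E}u(C,C')\bigr)=\mathcal{O}(M_u/m)$. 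The paper identifies the two without comment.

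Your feature-map argument uses Jensen for the mean and McDiarmid for the fluctuation of $\|\bar\xi-\mathbb{E}\xi\|$. It therefore lands directly on the population quantity, with the bias accounted for. The cost is that you need $u$ to be a reproducing kernel with feature map $\xi$, which holds for the Langevin--Stein kernel of a sufficiently smooth base kernel.

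Note that you control the unsquared $\widehat{\mathrm{KSD}}_V$. That matches the supremum definition of SD in Lemma~\ref{lem:interior_stein_domination}, but not the estimator the paper plugs into its certificate. To cover $\widehat{\mathrm{KSD}}^2_V$, add $\bigl|\|\bar\xi\|^2-\|\mathbb{E}\xi\|^2\bigr|\le 2\sqrt{M_u}\,\|\bar\xi-\mathbb{E}\xi\|$, which preserves the $\mathcal{O}(1/\sqrt{m})$ rate.

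Two further choices are small gains in rigor over the paper. You use exact atom indicators, whereas the paper's neighborhoods $c_i\le\tau_0$ and $c_i\ge 1-\tau_1$ estimate a slightly different mass. You also condition on the random interior count, which the paper leaves implicit.
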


We are now ready to present the main results. The following theorem provides a sufficient condition for ensuring a safe policy after each policy improvement step. The detailed proof can be found in Appendix \ref{ap:stein-cpo}.


\begin{theorem}[SteinGate Safety Bound]
\label{thm:stein-cpo}
Let $\pi_{k}$ be the current policy and let $\pi_{{k+1}}$ satisfy the trust-region constraint
\eqref{eq:stein-tr} in Appendix \ref{ap:stein-cpo}. If the Stein factor at $k^{\text{th}}$ iteration $\lambda_k$ satisfies
\[
\lambda_k \ge
\frac{
2H\,\epsilon_h^{\pi_{k+1}}
\displaystyle
\sum_{t=0}^{H-1}
\E_{\bar{s}\sim \bar{d}^{\pi}_k}
\!\left[
D_{\mathrm{TV}}\big(
\pi_{k+1}(\cdot\mid s),
\pi_k(\cdot\mid s)
\big)
\right]
}{
\underline{SD}_{\rm tot}^+(q_{\pi_k}\|p_{\mathrm{ref}})
}.
\]
where $\underline{SD}_{\rm tot}^+(q_{\pi_k}\|p_{\mathrm{ref}})$ takes the form in \eqref{eq:sd_lcb_def}, 
$\epsilon_h^{\pi_{k+1}}$ takes the form in \eqref{eq:tv_err}
.
Then with high probability at least $1-2\delta$, the tail-risk for any given test function $h\in\mathcal{H}$ satisfies
\begin{equation}
\begin{aligned}
\label{eq:stein-bound-main}
&C_h^{\pi_{k+1}}(\mu_0)
\le
\varepsilon
.
\end{aligned}
\end{equation}
\end{theorem}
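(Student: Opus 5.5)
The plan is a CPO-style argument. First bound the risk at $\pi_{k+1}$ by the risk at $\pi_k$ plus a distribution-shift penalty. Then bound the risk at $\pi_k$ by the Stein certificate. Finally, show that the condition on $\lambda_k$ lets the discrepancy term in the certificate absorb the shift penalty.

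\textbf{Step 1 (performance difference in the risk).} Since $C_h^{\pi}(\mu_0)=\E_{\tau\sim\pi}[h(C(\tau))]$ and $h$ is bounded on the clipped domain, I would write
\[
C_h^{\pi_{k+1}}-C_h^{\pi_k}=\E_{\tau\sim\pi_{k+1}}[h]-\E_{\tau\sim\pi_k}[h].
\]
This difference is controlled by the total variation between the two trajectory laws. A standard telescoping (simulation-lemma) argument over the horizon gives
\[
D_{\mathrm{TV}}(P^{\pi_{k+1}}_\tau,P^{\pi_k}_\tau)\le\sum_{t=0}^{H-1}\E_{s\sim d_t^{\pi_k}}\big[D_{\mathrm{TV}}(\pi_{k+1}(\cdot\mid s),\pi_k(\cdot\mid s))\big].
\]
I take the quantity $\epsilon_h^{\pi_{k+1}}$ in \eqref{eq:tv_err} to be the sup-type bound on the centered $h$ (e.g.\ $\max|h-\E h|$, or an advantage-like term), and the factor $2H$ to come from the per-step horizon expansion. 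The result is
\[
C_h^{\pi_{k+1}}\le C_h^{\pi_k}+2H\epsilon_h^{\pi_{k+1}}\sum_t\E_{\bar d^\pi_k}[D_{\mathrm{TV}}],
\]
where the trust-region condition \eqref{eq:stein-tr} guarantees that the right-hand side is finite and small.

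\textbf{Step 2 (certificate at $\pi_k$).} Lemma~\ref{lem:interior_stein_domination}, applied in its hybrid form, gives
\[
C_h^{\pi_k}\le \E_{p_{\mathrm{ref}}}[h]+B\,\mathrm{SD}_{\mathrm{tot}}(q_{\pi_k}\|p_{\mathrm{ref}}).
\]
The boundary atoms are handled by the one-sided discrete term, and the interior by $w(q)\mathrm{SD}_{\mathrm{int}}$. I would replace the population discrepancy by the empirical one using Proposition~\ref{prop:certificate-concentration}, spending $\delta$ on the atom-mass (Hoeffding) estimate and $\delta$ on the KSD V-statistic (McDiarmid under Assumption~\ref{ass:bounded-kernel-1}); a union bound yields probability $1-2\delta$. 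The lower confidence bound $\underline{SD}^+_{\mathrm{tot}}$ in \eqref{eq:sd_lcb_def} is then, with that probability, below the true discrepancy (and positive, as the superscript $+$ indicates).

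\textbf{Step 3 (absorbing the shift via $\lambda_k$).} The condition on $\lambda_k$ is exactly
\[
2H\epsilon_h^{\pi_{k+1}}\sum_t\E[D_{\mathrm{TV}}]\le\lambda_k\,\underline{SD}^+_{\mathrm{tot}}.
\]
I read this as the update being taken from the gate-passing iterate, with the margined certificate $\E_{p_{\mathrm{ref}}}[h]+B\,\mathrm{SD}_{\mathrm{tot}}+\lambda_k\,\underline{SD}^+_{\mathrm{tot}}\le\varepsilon$, which is how the margin enters \eqref{eq:stein-tr}. Chaining Steps 1--3 then gives $C_h^{\pi_{k+1}}\le\varepsilon$.

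The main obstacle is matching constants and the precise form of \eqref{eq:stein-tr} and \eqref{eq:tv_err}. In particular I need to verify that the $2H$ factor with the discounted-free visitation $\bar d^\pi_k$ follows from the horizon telescoping. I also need to check that the margin used in the gate is consistent with $\lambda_k$ scaling the lower confidence bound rather than the point estimate. I would first try the simulation lemma with per-timestep occupancy and bound $|h|$ on the clipped support, adjusting the $\epsilon_h$ definition accordingly.
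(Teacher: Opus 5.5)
There is a genuine gap. Your chain never uses the hypothesis \eqref{eq:stein-tr} as it is actually stated, and its first and last links fail under the statement's definitions.

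\textbf{Step 1 is false as stated.} You bound $C_h^{\pi_{k+1}}-C_h^{\pi_k}$ by the TV penalty alone, discarding the first-order surrogate term $\sum_t\E_{\bar s\sim\bar d^{\pi_k}_t,\,a\sim\pi_{k+1}}[A_h^{\pi_k}(\bar s,a)]$. The statement fixes $\epsilon_h^{\pi_{k+1}}=\max_{\bar s}|\E_{a\sim\pi_{k+1}}[A_h^{\pi_k}(\bar s,a)]|$ via \eqref{eq:tv_err}. With that definition the inequality fails in general, because the first-order term is of size $\epsilon_h^{\pi_{k+1}}$, not $\epsilon_h^{\pi_{k+1}}D_{\mathrm{TV}}$. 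Take $H=1$. The first-order term is then exactly $C_h^{\pi_{k+1}}-C_h^{\pi_k}$, which can equal $\epsilon_h^{\pi_{k+1}}$ (a single initial state at which the maximum is attained). Your right-hand side $2\epsilon_h^{\pi_{k+1}}D_{\mathrm{TV}}$ is smaller than that whenever $D_{\mathrm{TV}}<1/2$. Replacing $\epsilon_h$ by a sup-norm of the centered $h$ would make a simulation-lemma bound true, but that proves a different theorem.

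\textbf{Step 3 assumes something the theorem does not give.} You posit a margined certificate at $\pi_k$, namely $\E_{p_{\mathrm{ref}}}[h]+B\,\mathrm{SD}_{\mathrm{tot}}(q_{\pi_k}\|p_{\mathrm{ref}})+\lambda_k\underline{SD}^+_{\rm tot}\le\varepsilon$. Nothing in the statement supplies this. It involves the population discrepancy and the Stein factor $B$, while the gate only tests the empirical $U$, with no margin. What \eqref{eq:stein-tr} actually imposes is a constraint on the CPO surrogate of the \emph{new} policy: $L_h(\pi_{k+1};\pi_k)+m_k\le\varepsilon$. Here $L_h(\pi_{k+1};\pi_k)=C_h^{\pi_k}+\sum_t\E_{\bar s\sim\bar d^{\pi_k}_t,\,a\sim\pi_{k+1}}[A_h^{\pi_k}(\bar s,a)]$ and $m_k=\lambda_k\underline{SD}^+_{\rm tot}(q_{\pi_k}\|p_{\mathrm{ref}})$.

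\textbf{The missing idea} is to keep the advantage term so that \eqref{eq:stein-tr} can absorb it. Because $h(C(\tau))$ is not additive over time, the paper first augments the state with the running cost $z_t$, where $z_0=0$ and $z_{t+1}=z_t+c(s_t,a_t)$. It then uses $h(z_H)$ as a terminal cost with zero per-step cost, so that $C_h^\pi(\mu_0)=\bar V_c^\pi(\bar\mu_0)$ becomes an ordinary value. The CPO bound of Achiam et al.\ on this augmented CMDP gives
\[
C_h^{\pi_{k+1}}(\mu_0)\le L_h(\pi_{k+1};\pi_k)+2H\,\epsilon_h^{\pi_{k+1}}\sum_{t=0}^{H-1}\E_{\bar s\sim\bar d^{\pi_k}_t}\big[D_{\mathrm{TV}}\big(\pi_{k+1}(\cdot\mid s),\pi_k(\cdot\mid s)\big)\big].
\]
By \eqref{eq:stein-tr}, $L_h(\pi_{k+1};\pi_k)\le\varepsilon-m_k$. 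The condition on $\lambda_k$ says exactly that $m_k$ dominates the TV term, so $C_h^{\pi_{k+1}}(\mu_0)\le\varepsilon$.

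Your Step 2 is then unnecessary. The Stein discrepancy enters only as the size of the margin $m_k$, not through Lemma~\ref{lem:interior_stein_domination} applied at $\pi_k$. The $1-2\delta$ is simply inherited from the confidence event behind $\underline{SD}^+_{\rm tot}$ in Lemma~\ref{lem:sd_lcb}. Likewise, the factor $2H$, the occupancy $\bar d^{\pi_k}$ and $\epsilon_h$ all come directly from the cited CPO bound, so there are no constants to reconstruct from a telescoping argument.
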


Theorem 4.3 gives a sufficient condition under which a policy update preserves the prescribed risk budget. The lower bound on $\lambda_k$ specifies how conservative the certificate must be relative to the size of the policy change, as measured by the trust-region distance. Since the certificate is one-sided, increasing $\lambda_k$ tightens the gate by assigning greater weight to distributional deviation from the safe reference. This differs from a Lagrangian trade-off parameter: $\lambda_k$ does not determine the relative magnitude of reward and cost gradients in a shared objective, but only the conservativeness of the safety test used before selecting the update mode. Under the stated trust-region condition, any value above the bound is sufficient to keep the post-update risk within the budget with high probability.

\section{Experiments and Results}
\label{sec:experiments}
This section empirically evaluates SteinGate on a suite of continuous-control safety tasks. Our experiments are designed to examine whether SteinGate can alleviate the trade-off between task performance and safety violations that arises in constrained RL.
We further study the robustness of SteinGate under varying cost thresholds for the safety constraint and analyze its sensitivity to the choice of the interior reference distribution used in the safety updates.

\subsection{Experimental Setup}
\noindent \textbf{Benchmarks.}
We employ the {\em Safety Gymnasium} and {\em Safety MuJoCo} suites \citep{ ray2019benchmarking, zhang2020first}. Safety Gymnasium tasks (PointGoal, PointCircle, CarGoal, CarCircle) simulate autonomous navigation where agents must reach targets while avoiding hazards or constraining movement patterns. Safety MuJoCo tasks (Ant, HalfCheetah, Humanoid, Swimmer) focus on locomotion control with velocity limits, testing the agent's ability to maintain stability under state-dependent constraints.

\noindent \textbf{Baselines.}
We compare SteinGate against multiple safe RL methods. 1)
\textsc{CPO}  \citep{achiam2017constrained} represents the standard for expectation-based constraints, enforcing safety on the average cost. Notably, we utilize CPO as the base optimization engine for SteinGate.
2) \textsc{EVO}  \citep{gao2026extremevaluepolicyoptimization} utilizes extreme value theory to model the cost distribution tail, explicitly targeting rare, high-impact violations.
3) \textsc{Saute} \citep{sootla2022saute} and 4) \textsc{Simmer} \citep{sootla2022effects} employ state augmentation to strictly enforce constraints, with Simmer additionally inducing a curriculum on the safety budget.

\noindent \textbf{Evaluation Protocol.} All methods are trained with identical interaction budgets (10M steps) and environmental configurations. We report performance over six random seeds, visualizing the mean and standard deviation of episodic return and cost. We introduce two quantitative metrics to assess convergence: 1) \textit{Tail Feasibility} measures the fraction of episodes satisfying the cost constraint during the final 20\% of training, and 2) \textit{Feasible Return} reports the average episodic return conditioned on constraint satisfaction, isolating the performance of valid policies.

\noindent \textbf{Implementation Details.} To guarantee a fair comparison, all algorithms utilize identical policy and value network architectures and share core hyperparameters; this ensures that observed performance differences stem solely from the safety mechanisms. 
The practical estimation procedure of Stein certificate $U(\theta)$ is detailed in Appendix~\ref{sec:practical}.
We employ the \texttt{OmniSafe} implementations for standard baselines and the official public codebase for EVO. Detailed hyperparameter configurations are provided in the Appendix \ref{ap:hyperparameters}.

\subsection{Results and Discussion}
\label{subsec:results}
\begin{figure*}
    \centering
    \includegraphics[width=\linewidth]{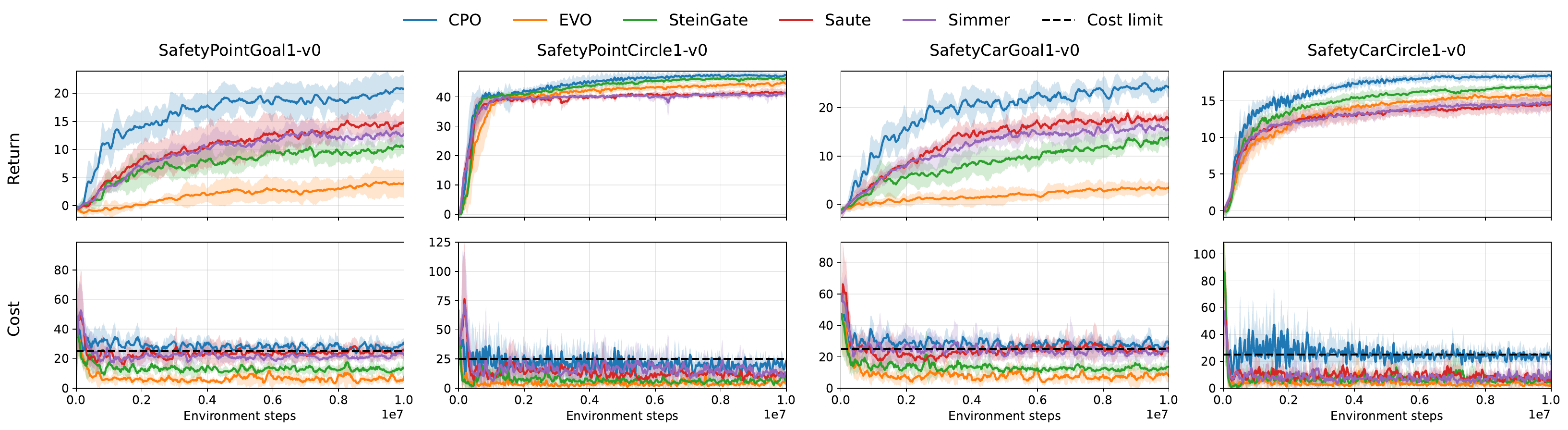}
    \caption{Results comparing SteinGate and baselines on Safety Gym tasks. The x-axis denotes the total training steps, and the y-axis reports either the average return or the average constraint value. Solid curves indicate the mean across runs, with shaded regions representing one standard deviation. The dashed horizontal line denotes the constraint threshold of 25.}
    \label{fig:main_results}
\end{figure*}

\noindent \textbf{SteinGate vs. Baselines.} Figure \ref{fig:main_results} illustrates the training dynamics across the Safety Gymnasium suite. SteinGate exhibits a robust ability to converge to the feasible set, driving episodic costs below the threshold while sustaining continuous reward improvement. In contrast, CPO prioritizes reward maximization at the expense of safety, resulting in frequent and sustained violations (particularly in {CarGoal1}). Conversely, EVO demonstrates strict safety adherence but suffers from significant conservatism, often plateauing at suboptimal return levels. This confirms that relying solely on tail modeling (EVO) or expectation constraints (CPO) leads to suboptimal trade-offs, whereas SteinGate effectively identifies policies that satisfy the constraint while achieving strong task performance. Table \ref{tab:tail_feas_feas_ret} quantifies this advantage over the final training phase. On the {PointGoal1} and {CarGoal1} tasks, SteinGate matches the near-perfect tail feasibility of {EVO} ($>0.98$) while delivering substantially higher feasible returns (e.g., nearly $3\times$ higher on {PointGoal1} and $4\times$ on {CarGoal1}). This indicates that SteinGate resolves the over-conservatism inherent in EVO's extreme-value approach. On {PointCircle1}, SteinGate outperforms CPO by achieving equivalent returns with strictly higher feasibility ($0.99$ vs. $0.77$), eliminating the safety violations common in primal-dual methods. Finally, compared to state-augmentation baselines ({Saute} and {Simmer}), SteinGate consistently achieves superior feasible returns, demonstrating that direct policy regulation is more efficient than curriculum-based or augmented-state approaches. We observe similar trends on the velocity benchmarks and against additional baselines; results are reported in Appendix~\ref{sec:additional_results}.

\begin{table}[t]
\centering
\small
\caption{Tail feasibility and feasible return across Safety Gymnasium tasks. Tail feasibility is computed over the final 20\% of training; feasible return is the episodic return conditioned on satisfying the cost limit.}
\label{tab:tail_feas_feas_ret}
\setlength{\tabcolsep}{2pt}
\begin{tabular}{llcc}
\toprule
Environment & Algorithm & Tail Feasibility $\uparrow$ & Feasible Return $\uparrow$ \\
\midrule

\multirow{5}{*}{PointGoal1}
& CPO           & 0.258 & 19.21 \\
& EVO           & 0.988 & 3.68  \\
& Saute     & 0.563 & 14.26 \\
& Simmer & 0.858 & 12.33 \\ 
& \textbf{SteinGate}     & \textbf{0.987} & \textbf{9.77}  \\
\midrule

\multirow{5}{*}{PointCircle1}
& CPO           & 0.775 & 47.16 \\
& EVO           & 0.990 & 44.27 \\
& Saute     & 0.992 & 41.25 \\
& Simmer & 0.915 & 40.80 \\
& \textbf{SteinGate}     & \textbf{0.992} & \textbf{46.15} \\
\midrule

\multirow{5}{*}{CarGoal1}
& CPO           & 0.257 & 22.86 \\
& EVO           & 0.997 & 3.23  \\
& Saute     & 0.575 & 17.54 \\
& Simmer & 0.698 & 15.62 \\
& \textbf{SteinGate}     & \textbf{0.998} & \textbf{12.65} \\
\midrule

\multirow{5}{*}{CarCircle1}
& CPO           & 0.578 & 18.24 \\
& EVO           & 0.995 & 15.64 \\
& Saute     & 0.998 & 14.31 \\
& Simmer & 0.977 & 14.51 \\
& \textbf{SteinGate}     & \textbf{0.997} & \textbf{16.77} \\
\bottomrule
\end{tabular}
\end{table}

\textbf{Sensitivity to Cost Thresholds.}
Figure~\ref{fig:clim_sensitivity} illustrates SteinGate’s behavior under varying cost limits $(\kappa \in \{10, 25, 40\})$. As the constraint is tightened, SteinGate adaptively regulates its policy to maintain episodic cost near the prescribed threshold while modulating the achievable return. For moderate thresholds ($\kappa$ = 25, 40), the learned policies consistently stabilize strictly below the cost limits. Under the highly restrictive setting ($\kappa$ = 10), the episodic cost fluctuates tightly around the threshold, indicating operation at the boundary of the feasible set with only marginal violations. Concurrently, episodic return increases monotonically as the cost limit is relaxed, demonstrating that SteinGate effectively capitalizes on the expanded safety budget to maximize task performance. Collectively, these results demonstrate that SteinGate scales smoothly with stringency of cost constraint, maintaining a stable balance between reward maximization and safety.

\begin{figure}[h]
    \centering
    \includegraphics[width=\linewidth]{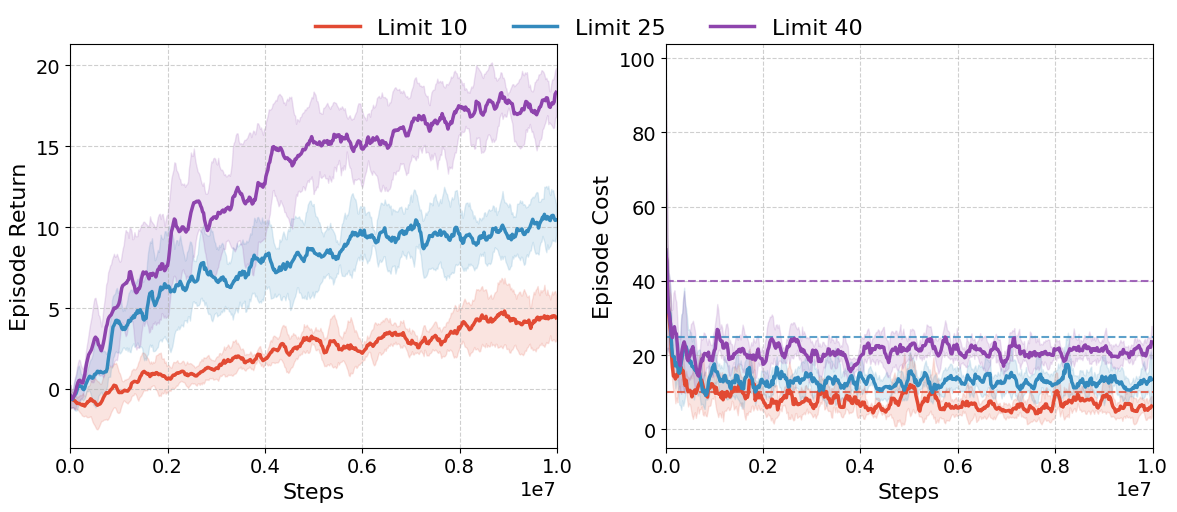}
    \caption{Cost-limit sensitivity of SteinGate on SafetyPointGoal1-v0. The plots show episodic return (left) and episodic cost (right) during training under different cost limits ($\kappa$ =10, 25, 40). Dashed lines indicate the corresponding cost thresholds.}
    \label{fig:clim_sensitivity}
\end{figure}

\textbf{Sensitivity to Reference Distribution.} 
Figure~\ref{fig:safe_ref} examines the impact of the interior reference distribution on \textsc{PointGoal1}, comparing the default $\mathrm{Beta}(2,5)$ against a conservative $\mathrm{Beta}(1,10)$ and a logit-normal alternative. Across all three specifications, the costs remain relatively consistent, suggesting that the mechanism for constraint enforcement is not highly sensitive to the specific choice of reference family on this task. The primary difference lies in the efficiency of reward acquisition: the conservative $\mathrm{Beta}(1,10)$ induces a more cautious utilization of the safety budget, resulting in lower task performance. Conversely, both the default $\mathrm{Beta}(2,5)$ and the logit-normal references converge to higher final returns. These results indicate that while the reference distribution modulates the degree of conservatism, SteinGate can maintain effective constraint satisfaction across different reasonable parametric choices. The observed robustness across different reference families suggests that SteinGate depends primarily on the specification of an admissible safety profile rather than on a particular parametric form. More conservative references induce correspondingly conservative policies, while comparable safety behavior across multiple reference families indicates that the framework is not tied to a specific distributional assumption.

\begin{figure}[h]
    \centering
    \includegraphics[width=\linewidth]{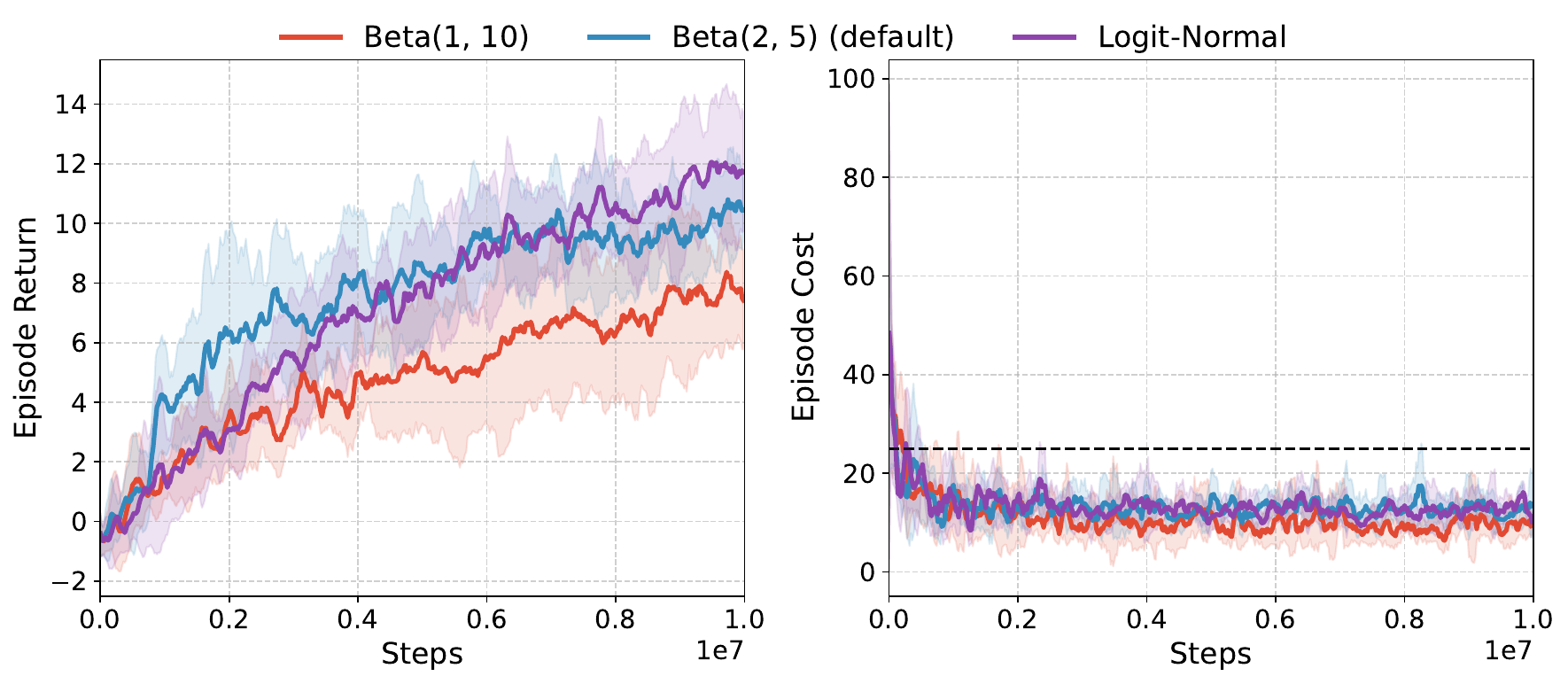}
    \caption{SteinGate safe reference ablation on SafetyPointGoal1-v0. Episodic return (left) and episodic cost (right) during training for three interior reference distributions: Beta(1,10), Beta(2,5), and logit-normal.}
    \label{fig:safe_ref}
\end{figure}

\textbf{Sensitivity to Sample Size.} \label{pg:sample_size_ablation}
Finally, we investigate the robustness of SteinGate to the number of cost samples available for estimating the certificate, in Figure~\ref{fig:safe_samples}. Unlike tail-modeling approaches (e.g., EVO) that often require off-policy data augmentation to accurately fit distributions, SteinGate operates strictly on-policy. We evaluated SteinGate on {CarGoal1} with effective batch sizes of $N \in \{8, 16, 32\}$ samples per epoch. Despite the scarcity of data in the low-sample regime, SteinGate consistently enforced safety across all settings. We observed a natural trade-off: larger batches ($N$=32) resulted in slightly more conservative policies, while smaller batches ($N$=8) exhibited higher variance but remained safe. These results demonstrate that the SteinGate mechanism is highly sample-efficient and effective even within the limited interaction budgets of standard on-policy training loops.

This behavior is consistent with the role of KSD as a sample-based distributional certificate rather than a parametric tail estimator: the method does not require fitting a rare-event model from a large buffer of exceedances, but instead monitors whether the observed rollout costs remain consistent with the prescribed reference.

\begin{figure}[h]
    \centering
    \includegraphics[width=\linewidth]{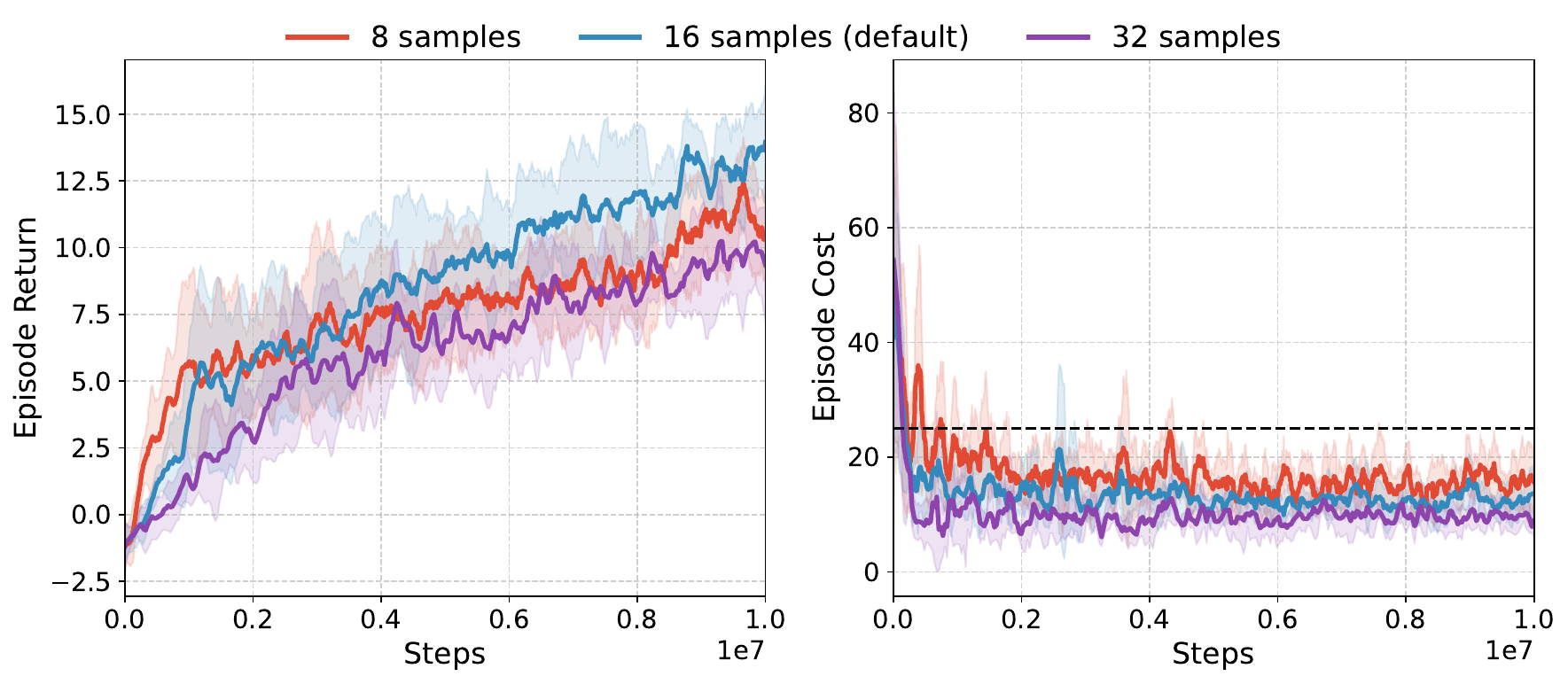}
    \caption{SteinGate sample size ablation on SafetyCarGoal1-v0. Episodic return (left) and episodic cost (right) during training for three sizes: 8, 16, and 32.}
    \label{fig:safe_samples}
\end{figure}

\section{Related Work}
\label{sec:related_work}
\noindent \textbf{Expectation-Based Safety.}
Safe RL is commonly formulated as a CMDP problem, where safety is enforced by constraining the \emph{expected} cumulative cost. 
Representative methods include trust-region methods like constrained policy optimization (CPO) \citep{achiam2017constrained}, FISOR \citep{ZheLiYu_24}, SEVPO \citep{ZhuYinWei_26}, projection-based approaches such as PCPO \citep{yang2020projection} and CUP \citep{yang2022constrained}, first-order constrained optimization in policy space (FOCOPS) \citep{zhang2020first}, multi-timescale Lagrangian methods such as RCPO \citep{tessler2018reward} and PID-Lagrangian \citep{stooke2020responsive}, and reductions to unconstrained RL\citep{O3SRL,CAPS}. While these approaches provide control over expected cost, they do not directly regulate the distribution of outcomes; policies with identical means may exhibit severe tail violations.

\noindent \textbf{Risk-Sensitive and Distributional Safety.}
To address this limitation, several works impose constraints on distributional functionals of cost. Common choices include Value-at-Risk (VaR) and Conditional Value-at-Risk (CVaR) \citep{rockafellar2002conditional}. In RL, CVaR constraints have been implemented via state augmentation and Lagrangian optimization \citep{chow2015risk, chow2018risk}, while actor--critic variants such as WCSAC \citep{yang2021wcsac} and QCPO \citep{jung2022quantile} estimate quantiles or parametric cost distributions. Extreme value theory has also been applied to model high-cost exceedances using generalized Pareto distributions, as in extreme value policy optimization (EVO) \citep{gao2026extremevaluepolicyoptimization} and related approaches \citep{ns2024extreme}. These methods explicitly target tail behavior but rely on estimating rare-event statistics.  
Alternatively, Sauté RL \citep{sootla2022saute} and Simmer \citep{sootla2022effects} enforce safety almost surely via budget state augmentation.

\noindent \textbf{Stein Discrepancies in RL.} Kernelized Stein discrepancy (KSD) \citep{liu2016kernelized} offers a nonparametric measure of distributional deviation, previously applied in RL for detecting transition shifts \citep{chakraborty2023posterior} or inducing pessimism in offline settings \citep{koppel2024information}. In contrast, we employ KSD to certify tail-sensitive safety in online RL via a boundary-aware formulation that monitors the cost distribution during optimization.

\section{Conclusion}
This paper introduced and studied SteinGate, a distributional safety framework for tail-sensitive constrained reinforcement learning. By leveraging kernelized Stein discrepancy, SteinGate certifies safety through deviation from a designated hybrid reference distribution, avoiding direct estimation of rare-event statistics. The boundary-aware formulation accounts for probability mass induced by cost clipping and episodic termination, enabling theoretically-sound safety monitoring under non-stationary policy updates. Empirical results across multiple benchmarks demonstrate that SteinGate reduces violation frequency and severity during training while maintaining strong task performance. These findings suggest that distributional certification provides a principled and practical approach to mitigating extreme violations in safe reinforcement learning.

\clearpage

\section{Acknowledgments}

The authors gratefully acknowledge the in part support by USDA-NIFA funded AgAID Institute
award 2021-67021-35344 and NSF ECCS-2534263 grant. The views expressed are those of the authors and do not reflect the official policy or position of the USDA-NIFA and the NSF.

\bibliography{uai2026-template,hhw}

\newpage

\onecolumn

\title{SteinGate: Tail-Sensitive Safe Reinforcement Learning via Stein Discrepancy\\(Supplementary Material)}
\maketitle

\appendix

\section{Background of Stein's Method and Kernelized Stein Discrepancy }
\label{ap:stein}

Stein's method yields identities and discrepancy measures between distributions. 
\begin{definition}
    Assume that $\mathcal{X}$ is a subset of $\mathbb{R}^d$ and $p(x)$ a smooth density whose support is $\mathcal{X}$. The (Stein) score function of $p$ is defined as
\begin{equation}
s_p \;=\; \nabla_x \log p(x) \;=\; \frac{\nabla_x p(x)}{p(x)}\,.
\end{equation}
\end{definition}
\begin{definition}
We say that a function $f:\mathcal{X}\to\mathbb{R}$ is in the \textbf{Stein class} of $p$, denoted as $f \in \mathcal S(p)$,
if $f$ is smooth and satisfies
\begin{equation}\label{eq:stein-class}
\int_{x\in\mathcal{X}} \nabla_x\!\big(f(x)\,p(x)\big)\,dx \;=\; 0.
\end{equation}
\end{definition}
\begin{definition}
The Stein operator of $p$ is a linear operator acting on the Stein class of $p$, defined as
\begin{equation}
\mathcal{A}_{p} f(x) \;=\; s_{p}(x)\, f(x)^{\top} \;+\; \nabla_{x} f(x).
\end{equation}
\end{definition}
\begin{lemma}[Stein's Identity]
Assume $p(x)$ is a smooth density supported on $\mathcal{X}$, then
\begin{equation}
\mathbb{E}_{p}\!\left[\mathcal{A}_{p} f(x)\right]
=
\mathbb{E}_{p}\!\left[s_{p}(x)\, f(x)^{\top} + \nabla f(x)\right]
= 0,
\end{equation}
for any $f$ that is in the Stein class of $p$.
\end{lemma}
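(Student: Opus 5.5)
The statement to prove is Stein's Identity, $\mathbb{E}_p[\mathcal{A}_p f(x)]=0$ for every $f$ in the Stein class $\mathcal S(p)$. The plan is to show that the integrand $p(x)\,\mathcal{A}_p f(x)$ is exactly a total derivative of $f p$, so the identity reduces to the defining condition \eqref{eq:stein-class}.

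First, I would write the expectation as an integral against the density over its support:
\begin{equation*}
\mathbb{E}_p[\mathcal{A}_p f(x)] = \int_{\mathcal{X}} p(x)\bigl(s_p(x) f(x)^\top + \nabla_x f(x)\bigr)\,dx .
\end{equation*}
Since $p$ is supported on $\mathcal{X}$, we have $p>0$ there. The score $s_p = \nabla_x p / p$ is therefore well defined, and
\begin{equation*}
p(x)\, s_p(x) = \nabla_x p(x).
\end{equation*}

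Second, substituting this gives the integrand $\nabla_x p(x)\, f(x)^\top + p(x)\,\nabla_x f(x)$. By the product rule, applied componentwise with $f$ smooth and $p$ smooth, this equals $\nabla_x\bigl(f(x)\,p(x)\bigr)$. The indexing convention for the outer product must match that of $\mathcal{A}_p$. For vector-valued $f$ each entry is $\partial_i\bigl(f_j p\bigr)$, and taking the trace recovers the divergence form \eqref{eq:stein-op}.

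Third, the integral of $\nabla_x(fp)$ over $\mathcal{X}$ vanishes by the definition of $f\in\mathcal S(p)$, which gives the claim.

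The only delicate point is justifying that the integrals are well defined, so that splitting and recombining terms is legitimate. I would handle this by assuming, as implicit in the Stein class definition, that $\nabla_x(fp)$ is integrable. Then no splitting of the integral is needed, because the integrand is rewritten pointwise before integrating.

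I would also remark that the definition of the Stein class is precisely where the boundary terms from integration by parts are absorbed. On a bounded domain such as $(0,1)$ it amounts to requiring that $f p$ vanish at the endpoints. This is why boundary atoms require the separate treatment given by the hybrid reference.
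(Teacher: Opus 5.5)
Your argument is correct and is the standard one. The paper states Stein's identity as background (following the KSD literature) without writing out a proof, and the intended justification is exactly that $p\,\mathcal{A}_p f = \nabla_x(fp)$ pointwise, so the expectation equals the integral in the Stein-class definition and therefore vanishes. One small correction, to your closing remark only: on $(0,1)$ the Stein-class condition reads $\lim_{c\uparrow 1}(fp)(c)=\lim_{c\downarrow 0}(fp)(c)$, so $fp$ vanishing at both endpoints (which is what the paper verifies for $f_h$ in its one-dimensional lemma) is sufficient rather than equivalent.
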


Let $k(x,x')$ be a positive definite kernel. The spectral decomposition of $k(x,x')$,
as implied by Mercer's theorem, is defined as
\begin{equation}\label{eq:mercer}
k(x,x')=\sum_{j}\lambda_j e_j(x)e_j(x'),
\end{equation}
where $\{e_j\}$ and $\{\lambda_j\}$ are the orthonormal eigenfunctions and positive eigenvalues of
$k(x,x')$, respectively, satisfying
\[
\int e_i(x)e_j(x)\,dx=\mathbb{I}[i=j],\qquad \forall\, i,j.
\]

For a positive definite kernel $k(x,x')$, its related RKHS $\mathcal{H}$ comprises linear combinations
of its eigenfunctions, i.e.,
\[
f(x)=\sum_{j} f_j e_j(x)
\quad \text{with} \quad
\sum_{j}\frac{f_j^2}{\lambda_j}<\infty,
\]
endowed with an inner product
\[
\langle f,g\rangle_{\mathcal{H}}=\sum_{j}\frac{f_j g_j}{\lambda_j}
\quad \text{between} \quad
f(x)=\sum_{j} f_j e_j(x),\ \ g(x)=\sum_{j} g_j e_j(x).
\]
Thus this Hilbert space is equipped with a norm $\|f\|_{\mathcal{H}}$ where
\[
\|f\|_{\mathcal{H}}^2=\langle f,f\rangle_{\mathcal{H}}=\sum_{j}\frac{f_j^2}{\lambda_j}.
\]

One can verify that $k(x,\cdot)\in\mathcal{H}$ and satisfies the important \emph{reproducing} property,
\[
f(x)=\langle f,k(\cdot,x)\rangle_{\mathcal{H}},
\qquad
k(x,x')=\langle k(\cdot,x),k(\cdot,x')\rangle_{\mathcal{H}}.
\]
Every positive definite kernel $k$ defines a unique RKHS for which $k$ is a reproducing kernel.
\begin{definition}
A kernel $k(x,x')$ is said to be \emph{integrally strictly positive definite}, if for any function $g$
that satisfies $0<\|g\|_2^2<\infty$,
\begin{equation}\label{eq:ispd}
\int_{\mathcal{X}} g(x)\,k(x,x')\,g(x')\,dx\,dx' \;>\; 0.
\end{equation}
\end{definition}

\begin{definition}
The \textbf{kernelized Stein discrepancy} (KSD) $\mathbb{S}(p,q)$ between distribution $p$ and $q$ is defined as
\begin{equation}\label{eq:ksd}
\mathbb{S}(p,q)
=\mathbb{E}_{x,x'\sim p}\!\left[\delta_{q,p}(x)^{\top}\,k(x,x')\,\delta_{q,p}(x')\right],
\end{equation}
where $\delta_{q,p}(x)=s_q(x)-s_p(x)$ is the score difference between $p$ and $q$, and $x,x'$ are i.i.d.\ draws from $p(x)$.
\end{definition}

\begin{proposition}
Define $g_{p,q}(x)=p(x)\big(s_q(x)-s_p(x)\big)$.
Assume $k(x,x')$ is integrally strictly positive definite, and $p,q$ are continuous densities with
$\|g_{p,q}\|_2^2<\infty$. Then $\mathbb{S}(p,q)\ge 0$ and $\mathbb{S}(p,q)=0$ if and only if $p=q$.
\end{proposition}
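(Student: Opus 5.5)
The statement to be proved is the final proposition of the appendix background: $\mathbb{S}(p,q)\ge 0$, with equality iff $p=q$. I will rewrite $\mathbb{S}(p,q)$ as an integral of $g_{p,q}$ against the kernel, and then apply integral strict positive definiteness.

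\emph{Step 1: rewrite $\mathbb{S}$.} Since $x,x'$ are i.i.d.\ from $p$, Fubini (justified by $\|g_{p,q}\|_2^2<\infty$ together with boundedness or integrability of $k$) gives
\[
\mathbb{S}(p,q)=\int\!\!\int p(x)\,\delta_{q,p}(x)^{\top}k(x,x')\,\delta_{q,p}(x')\,p(x')\,dx\,dx'
=\int\!\!\int g_{p,q}(x)^{\top}k(x,x')\,g_{p,q}(x')\,dx\,dx' .
\]
For vector-valued $g$ this is a sum over coordinates $\ell$ of scalar quadratic forms $\int\!\!\int g_\ell(x)k(x,x')g_\ell(x')\,dx\,dx'$.

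\emph{Step 2: nonnegativity.} Each coordinate form is $\ge 0$. When $g_\ell\neq 0$ in $L^2$ this follows directly from integral strict positive definiteness. When $g_\ell=0$ a.e., the form is zero. Alternatively, Mercer's expansion \eqref{eq:mercer} gives $\sum_j\lambda_j\big(\int g_\ell e_j\big)^2\ge 0$. Summing over coordinates yields $\mathbb{S}(p,q)\ge 0$.

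\emph{Step 3: the equality case.} If $p=q$, then $\delta_{q,p}\equiv 0$, so $\mathbb{S}=0$. Conversely, suppose $\mathbb{S}=0$. Every coordinate form is nonnegative, so each must vanish. Integral strict positive definiteness then forces $\|g_\ell\|_2=0$, hence $p(x)\big(s_q(x)-s_p(x)\big)=0$ a.e.

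On the support of $p$ this gives $\nabla\log q=\nabla\log p$. So $\log(q/p)$ is constant on each connected component, i.e.\ $q=cp$ there. Normalization then gives $c=1$ and $p=q$.

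\emph{Main obstacle.} The delicate step is this last passage from equal scores to equal densities. It needs continuity of $p$ and $q$, a connected common support (the ambient setup of a smooth density supported on $\mathcal{X}$), and the requirement that $q$ place no mass outside the support of $p$. I will handle it by working on $\mathcal{X}$, which is the assumed common support, and invoking the continuity hypotheses. If $\mathcal{X}$ is not connected, the argument only shows proportionality on each component, so I would state connectedness as implicitly assumed.
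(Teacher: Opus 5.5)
The paper gives no proof of this proposition; it is background quoted (up to relabeling of $p$ and $q$) from \citep{liu2016kernelized}. Your argument is the standard one from that source: write $\mathbb{S}(p,q)$ as the quadratic form $\int\!\!\int g_{p,q}(x)^{\top}k(x,x')\,g_{p,q}(x')\,dx\,dx'$, split it coordinatewise so the scalar ISPD definition applies, conclude $g_{p,q}=0$ a.e.\ at equality, and pass from equal scores to equal densities.

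It is correct, and the caveats in your last paragraph are genuinely needed rather than artifacts of your route. For instance, take $q(x)\propto x^2e^{-x^2/2}$ on $\mathbb{R}$ and $p=2q\,\mathbf{1}\{x>0\}$. Both densities are continuous, and $s_p=s_q=2/x-x$ on the support of $p$, so $g_{p,q}\equiv 0$ and $\mathbb{S}(p,q)=0$ although $p\neq q$. Likewise, two mixtures of the same disjointly supported bumps with different weights defeat the statement when the support is disconnected. So, as you propose, $p$ and $q$ must be assumed to share a connected support $\mathcal{X}$.

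A minor point: Step 1 needs no Fubini argument. It is just the definition of an expectation under the product density $p(x)p(x')$. Your suggested justification via boundedness of $k$ would not by itself give absolute integrability for $g_{p,q}\in L^2$ on unbounded $\mathcal{X}$. In any case, finiteness of the form is already presupposed by the ISPD definition.
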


\begin{definition}
A kernel $k(x,x')$ is said to be in the Stein class of $p$ if $k(x,x')$ has continuous second-order
partial derivatives, and both $k(x,\cdot)$ and $k(\cdot,x)$ are in the Stein class of $p$ for any fixed $x$.
\end{definition}

It is easy to check that the RBF kernel
\[
k(x,x')=\exp\!\left(-\frac{1}{2h^{2}}\|x-x'\|_2^{2}\right)
\]
is in the Stein class for smooth densities supported on $\mathcal{X}=\mathbb{R}^{d}$.

\begin{proposition}
If $k(x,x')$ is in the Stein class of $p$, so is any $f\in\mathcal{H}$.
\end{proposition}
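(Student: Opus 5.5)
The plan is to reduce the claim for a general $f\in\mathcal{H}$ to the assumed Stein-class property of the kernel. Two tools do the work: the reproducing property, and the fact that a Bochner integral in $\mathcal{H}$ commutes with bounded linear functionals. Two things must be shown: that every $f\in\mathcal{H}$ is smooth enough for the Stein-class definition \eqref{eq:stein-class} to make sense, and that $\int_{\mathcal{X}}\nabla_x\big(f(x)p(x)\big)\,dx=0$.

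\emph{Step 1 (regularity and derivative reproducing property).} Since $k$ has continuous second-order partial derivatives, the standard RKHS fact (e.g.\ Steinwart--Christmann, Cor.~4.36) gives the following. Every $f\in\mathcal{H}$ is continuously differentiable. Each partial derivative $\partial_{x_i}k(\cdot,x)$ lies in $\mathcal{H}$. And derivatives are reproduced:
\[
\partial_{x_i} f(x)=\big\langle f,\ \partial_{x_i}k(\cdot,x)\big\rangle_{\mathcal{H}} .
\]
I would invoke this fact rather than reprove it; a sketch uses difference quotients of $k(\cdot,x)$, which are Cauchy in $\mathcal{H}$ because of the mixed second derivatives of $k$. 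Combining this with $f(x)=\langle f,k(\cdot,x)\rangle_{\mathcal{H}}$ and the product rule, for each coordinate $i$,
\[
\partial_{x_i}\big(f(x)p(x)\big)=\big\langle f,\ \Phi_i(x)\big\rangle_{\mathcal{H}},\qquad \Phi_i(x):=\partial_{x_i}\big(k(\cdot,x)\,p(x)\big)\in\mathcal{H}.
\]

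\emph{Step 2 (exchange integral and inner product).} Assume $x\mapsto\Phi_i(x)$ is Bochner integrable in $\mathcal{H}$, i.e.\ $\int_{\mathcal{X}}\|\Phi_i(x)\|_{\mathcal{H}}\,dx<\infty$. Then $G_i:=\int_{\mathcal{X}}\Phi_i(x)\,dx\in\mathcal{H}$ is well defined, and since $\langle f,\cdot\rangle_{\mathcal{H}}$ is a bounded linear functional,
\[
\int_{\mathcal{X}}\partial_{x_i}\big(f(x)p(x)\big)\,dx=\langle f,G_i\rangle_{\mathcal{H}}.
\]
Now evaluate $G_i$ pointwise by pairing with $k(\cdot,y)$, which is again a bounded functional:
\[
G_i(y)=\int_{\mathcal{X}}\partial_{x_i}\big(k(y,x)p(x)\big)\,dx .
\]
This vanishes for every $y$, because $k(y,\cdot)$ is in the Stein class of $p$ by hypothesis. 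So $G_i$ is the zero function, hence the zero element of $\mathcal{H}$, and the integral equals $\langle f,0\rangle_{\mathcal{H}}=0$. Doing this for every $i$ proves the proposition.

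\emph{Main obstacle.} The delicate point is the integrability in Step 2, which the statement leaves implicit, as in Liu et al.'s original KSD paper. I would make it explicit as a mild regularity condition. By the reproducing property,
\[
\|\Phi_i(x)\|_{\mathcal{H}}^2=\partial_{x_i}\partial_{x'_i}\big[k(x,x')p(x)p(x')\big]\Big|_{x'=x},
\]
so it suffices that the square root of this quantity is integrable over $\mathcal{X}$. For the RBF kernel and densities with, say, integrable $p$ and $\|\nabla p\|$, this holds because $k$ and its derivatives are bounded.

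If one prefers to avoid Bochner integrals, there is an alternative route through the Mercer expansion \eqref{eq:mercer}. Write $f=\sum_j f_j e_j$ and note that each $\sqrt{\lambda_j}e_j$ inherits the Stein-class property from $k$. Then justify interchanging the sum and the integral by Fubini, using $\sum_j f_j^2/\lambda_j<\infty$ together with Cauchy--Schwarz. This needs the same integrability condition, now phrased termwise.
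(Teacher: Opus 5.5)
The paper gives no proof of this proposition. It appears only in the background appendix, restated from the original KSD work of Liu et al.\ without argument, so there is no argument of the paper's to set yours against. Your argument is correct. The supporting steps check out as follows.
\begin{itemize}
\item Step~1 is the standard derivative-reproducing property. It applies on the interior of $\mathcal{X}$ and needs only the continuous mixed derivatives $\partial_{x_i}\partial_{x'_i}k$, which the hypothesis supplies.
\item The evaluation $\Phi_i(x)(y)=\partial_{x_i}\big(k(y,x)p(x)\big)$ is legitimate, because norm convergence of difference quotients in $\mathcal{H}$ implies pointwise convergence.
\item Strong measurability of $x\mapsto\Phi_i(x)$ is automatic, since this map is norm-continuous into the separable space $\mathcal{H}$.
\end{itemize}
So your only real addition is $\int_{\mathcal{X}}\|\Phi_i(x)\|_{\mathcal{H}}\,dx<\infty$. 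Note also that under the stated hypotheses, elements of $\mathcal{H}$ are in general only $C^1$. ``Smooth'' in the Stein-class definition therefore has to be read as $C^1$, which is exactly what you establish.

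Your integrability hypothesis is not a mere convenience: the proposition as printed is false without some such condition. Here is a counterexample.
\begin{itemize}
\item Take $d=1$, $p$ the standard normal density, $g(x)=e^{3x^2/4}$, $k_0(x,x')=e^{-(x-x')^2/2}$ and $k(x,x')=g(x)g(x')k_0(x,x')$.
\item For fixed $x'$ one has $k(x,x')p(x)\propto e^{-x^2/4+xx'}$. So $\partial_x\big(k(x,x')p(x)\big)$ is integrable with zero integral, and by symmetry $k$ is in the Stein class of $p$.
\item However, $\mathcal{H}=\{gu:\,u\in\mathcal{H}_0\}$ with $\|gu\|_{\mathcal{H}}=\|u\|_{\mathcal{H}_0}$, where $\mathcal{H}_0$ is the Gaussian RKHS.
\item The function $u=\sum_{n\ge 1}n^{-1}k_0(\cdot,n)$ has finite norm, since its squared norm is $\sum_{n,m}(nm)^{-1}e^{-(n-m)^2/2}<\infty$.
\item Yet $f=gu$ satisfies $f(n)p(n)\ge n^{-1}(2\pi)^{-1/2}e^{n^2/4}\to\infty$.
\end{itemize}
Thus $fp$ is unbounded on $[0,\infty)$, so $\int_{\mathbb{R}}(fp)'\,dx$ exists neither as a Lebesgue nor as an improper integral, and $f$ is not in the Stein class. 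Consistently, your condition fails here: $\|\Phi(x)\|_{\mathcal{H}}^2=\varphi'(x)^2+\varphi(x)^2$ with $\varphi=gp\propto e^{x^2/4}$.

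One last remark: keep the Bochner route rather than your Mercer alternative. The Lebesgue-orthonormal expansion written in the paper's background does not exist for the RBF kernel on $\mathbb{R}^d$, because its integral operator is a convolution with purely continuous spectrum. That route would have to be rebuilt with respect to a finite measure.
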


\begin{theorem}
Assume $p$ and $q$ are smooth densities and $k(x,x')$ is in the Stein class of $p$.
Define
\begin{align}
u_q(x,x')
&= s_q(x)^{\top} k(x,x') s_q(x')
   + s_q(x)^{\top}\nabla_{x'} k(x,x')
   + \nabla_{x} k(x,x')^{\top} s_q(x')
   + \operatorname{trace}\!\big(\nabla_{x,x'} k(x,x')\big).
\end{align}
Then
\begin{equation}
\mathbb{S}(p,q)=\mathbb{E}_{x,x'\sim p}\!\left[u_q(x,x')\right].
\end{equation}
\end{theorem}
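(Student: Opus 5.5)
The plan is the standard two-step argument of \citet{liu2016kernelized}. It rests on one auxiliary identity that converts the Stein operator of $q$, integrated under $p$, into a score-difference term.

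\textbf{Auxiliary identity.} For any $f$ in the Stein class of $p$, write $\mathcal{A}_q f = s_q f^\top + \nabla f$. Then
\[
\mathcal{A}_q f(x) = \mathcal{A}_p f(x) + \big(s_q(x)-s_p(x)\big) f(x)^\top .
\]
By Stein's identity for $p$, $\mathbb{E}_p[\mathcal{A}_p f]=0$. Hence
\[
\mathbb{E}_{x\sim p}[\mathcal{A}_q f(x)] = \mathbb{E}_{x\sim p}\big[\delta_{q,p}(x) f(x)^\top\big].
\]
This is the only place where the Stein-class hypothesis on $k$ enters.

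\textbf{Writing $u_q$ as an iterated operator.} I would express $u_q(x,x')$ as the trace of the Stein operator $\mathcal{A}_q$ applied once in the variable $x$ and once in $x'$ to the kernel. Expanding
\[
\operatorname{trace}\big(\mathcal{A}_q^{x}\,(\mathcal{A}_q^{x'} k(x,\cdot))^\top\big)
\]
reproduces exactly the four terms $s_q^\top k s_q$, $s_q^\top\nabla_{x'}k$, $\nabla_x k^\top s_q$ and $\operatorname{trace}(\nabla_{x,x'}k)$.

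\textbf{Integrating out $x$.} Fix $x'$ and integrate over $x\sim p$. The $x$-dependent factor is of the form $s_q(x) g(x)^\top + \nabla_x g(x)$, where $g(x) = s_q(x')k(x,x') + \nabla_{x'}k(x,x')$ is built from $k(\cdot,x')$ and its $x'$-derivative. By assumption $k(\cdot,x')$ is in the Stein class of $p$, and I will argue that its $x'$-derivative is as well, using the continuous second partials and differentiation under the integral. The auxiliary identity then gives
\[
\mathbb{E}_{x\sim p}[u_q(x,x')] = \mathbb{E}_{x\sim p}\big[\delta_{q,p}(x)^\top\big(s_q(x')k(x,x')+\nabla_{x'}k(x,x')\big)\big].
\]

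\textbf{Integrating out $x'$.} Fix $x$ and view the integrand as a function of $x'$. It equals $\mathcal{A}_q^{x'}$ applied to $f(x') = k(x,x')\,\delta_{q,p}(x)$, contracted appropriately. Since $\delta_{q,p}(x)$ is constant in $x'$ and $k(x,\cdot)$ is in the Stein class of $p$, this $f$ is in the Stein class too. A second application of the auxiliary identity, after exchanging the order of integration by Fubini, yields
\[
\mathbb{E}_{x,x'\sim p}\big[\delta_{q,p}(x)^\top k(x,x')\,\delta_{q,p}(x')\big] = \mathbb{S}(p,q).
\]

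\textbf{Main obstacle.} The difficulty is justifying the analytic steps rather than the algebra. Specifically, I need that the derivative $\nabla_{x'}k(\cdot,x')$ is in the Stein class of $p$, and that Fubini applies, which requires integrability of $u_q$ and of $\delta_{q,p}(x)^\top k\,\delta_{q,p}(x')$ under $p\otimes p$. I would handle this by assuming, as implicit in the statement, that $\mathbb{E}_p\|s_q\|^2<\infty$ and $\mathbb{E}_p\|s_p\|^2<\infty$ and that $k$ and its derivatives are bounded. I would then use dominated convergence to pass $\nabla_{x'}$ through the boundary integral $\int\nabla_x(k(x,x')p(x))\,dx = 0$.
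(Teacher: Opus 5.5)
The paper gives no proof of this theorem. It is reproduced in Appendix~A as background from \citet{liu2016kernelized} (their Theorem~3.6). Your argument is correct and is essentially the standard proof from that source: two applications of $\mathbb{E}_{x\sim p}[\mathcal{A}_q f(x)]=\mathbb{E}_{x\sim p}[\delta_{q,p}(x)f(x)^{\top}]$, one in each variable, with Fubini in between.

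Two minor remarks. First, you obtain the Stein-class membership of $\nabla_{x'}k(\cdot,x')$ by differentiating under the integral. It can also be read off from the appendix's proposition that every $f\in\mathcal{H}$ lies in the Stein class of $p$, since $\partial_{x'_j}k(\cdot,x')\in\mathcal{H}$ for a kernel with continuous second partials. Second, the integrability hypotheses you add go beyond the stated ones, but they are what the informal statement tacitly requires. With bounded kernel derivatives, first moments $\mathbb{E}_p\|s_q\|,\ \mathbb{E}_p\|s_p\|<\infty$ already suffice for all your dominated-convergence and Fubini steps, because the double expectation is over the product measure $p\otimes p$.
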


\section{Theoretical Proofs}

\subsection{Proof of Lemma \ref{lem:interior_stein_domination}} \label{ap:proof_of_stein_certi}

First, we show that $B(h,p)$ has an upper bound for specific test-function class $\mathcal H$.

\begin{lemma}\label{lem:A2-1d}
Let $p$ be a density distribution such that $p\in C^1((0,1))$ and $p(c)>0$ for all $c\in(0,1)$.
Let $F(c):=\int_0^c p(t)\,dt$ and $s_p(c):=\partial_c\log p(c)$.
Let $\mathcal A_p$ be the (one-dimensional) Langevin--Stein operator
\[
(\mathcal A_p f)(c):= f'(c)+ s_p(c) f(c).
\]
Let $\mathcal H$ be any test-function class satisfying $\mathcal H\subseteq\mathcal H_0$ where
\[
\mathcal H_0:=\{h:[0,1]\to[0,1], \text{nondecreasing} \}
\]
Assume the following two quantities are finite:
\begin{equation}\label{eq:KpMp}
\begin{aligned}
    &K_p:=\sup_{c\in(0,1)}\frac{\min\{F(c),\,1-F(c)\}}{p(c)}<\infty,
\\
&M_p:=\sup_{c\in(0,1)}|w(c)\,s_p(c)|<\infty.
\end{aligned}
\end{equation}
Denote 
\begin{equation}
    \mathcal P_{\mathrm{int}}(\mathcal H,\mathcal F)\coloneqq\{p\mid f_h\in\mathcal F\cap \mathcal S(p), \forall \ h\in\mathcal H\}
    \label{eq:def-Pint}
\end{equation}
Then $p\in\mathcal P_{\mathrm{int}}(\mathcal H,\mathcal F_{\mathrm{int}})$.
More precisely, for any $h\in\mathcal H$ and $g(c):=h(c)-\mathbb E_{C\sim p}[h(C)]$,
the Stein equation $\mathcal A_p f_h=g$ admits a solution $f_h\in\mathcal F_{\mathrm{int}}$ given by
\begin{equation}\label{eq:fh-explicit}
f_h(c)=\frac{1}{p(c)}\int_0^c g(t)p(t)\,dt
      =-\frac{1}{p(c)}\int_c^1 g(t)p(t)\,dt,
\end{equation}
and its Stein factor satisfies
\begin{equation}\label{eq:stein-factor-bound-1d}
\begin{aligned}
B_{\mathrm{int}}(h,p):=\|f_h\|_{\mathcal F_{\mathrm{int}}}
&\le \Big(K_p+\tfrac14+M_pK_p\Big)\,\|g\|_\infty\\
&\le 2\Big(K_p+\tfrac14+M_pK_p\Big).
\end{aligned}
\end{equation}
\end{lemma}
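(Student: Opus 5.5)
The plan is to verify the explicit formula \eqref{eq:fh-explicit} directly and then bound $f_h$ and its derivative pointwise. Throughout, I will read the norm on $\mathcal F_{\mathrm{int}}$ as $\|f\|_{\mathcal F_{\mathrm{int}}}=\|f\|_\infty+\|w f'\|_\infty$ with the boundary weight $w(c)=c(1-c)$, so that $\sup_{(0,1)} w=\tfrac14$. This is the reading that makes the constant $K_p+\tfrac14+M_pK_p$ appear naturally, and the main subtlety is fixing this convention, since it is not spelled out earlier.

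\textbf{Step 1: the formula solves the Stein equation.} Set $G(c):=\int_0^c g(t)p(t)\,dt$. Since $p\in C^1((0,1))$, $p>0$, and $h$ is monotone (hence bounded and measurable), $G$ is absolutely continuous with $G'=gp$ almost everywhere. For $f_h=G/p$, the product rule gives
\[
f_h'=\frac{gp}{p}-\frac{G\,p'}{p^2}=g-s_p f_h ,
\]
which is exactly $\mathcal A_p f_h=g$. Because $\mathbb E_{C\sim p}[g(C)]=0$, we have $G(1)=0$, so $G(c)=-\int_c^1 gp$. This gives the second expression in \eqref{eq:fh-explicit}.

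\textbf{Step 2: Stein class membership.} We need $\int_0^1 (f_hp)'\,dc=0$, which here means $[G]_0^1=0$. This holds because $G(0)=0$ by definition and $G(1)=0$ by centering, so $f_h\in\mathcal S(p)$.

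\textbf{Step 3: sup-norm bound.} The two expressions for $G$ give two bounds:
\[
|G(c)|\le\|g\|_\infty F(c)\quad\text{and}\quad|G(c)|\le\|g\|_\infty(1-F(c)).
\]
Taking the smaller of the two and dividing by $p(c)$ yields
\[
|f_h(c)|\le \|g\|_\infty\,\frac{\min\{F(c),1-F(c)\}}{p(c)}\le K_p\|g\|_\infty .
\]

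\textbf{Step 4: weighted derivative bound.} Multiply the ODE from Step 1 by $w$:
\[
|w f_h'|\le w|g|+|w s_p|\,|f_h|\le \Big(\tfrac14+M_pK_p\Big)\|g\|_\infty .
\]
Adding this to Step 3 gives the first inequality in \eqref{eq:stein-factor-bound-1d}.

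\textbf{Step 5: the final constant.} Since $h\in\mathcal H_0$ takes values in $[0,1]$, so does its mean, hence $\|g\|_\infty\le1\le2$. This gives the second inequality and shows $p\in\mathcal P_{\mathrm{int}}(\mathcal H,\mathcal F_{\mathrm{int}})$.

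\textbf{Main obstacle.} The delicate point is the endpoint behaviour: $s_p$ may blow up at $0$ and $1$ (for example, for a Beta density). This is exactly why the derivative is controlled only through $w f_h'$ and the assumption $M_p<\infty$, and why finiteness of $K_p$ handles $f_h$ near the boundary. If the intended $\mathcal F_{\mathrm{int}}$ norm differs, for instance an RKHS norm, I would first bound that norm by $\|f\|_\infty+\|wf'\|_\infty$ for the clipped interior kernel, and then reuse Steps 3–4 unchanged.
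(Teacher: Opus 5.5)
Your proposal is correct and follows essentially the same route as the paper. Both arguments obtain $f_h=G/p$ from $(pf)'=gp$ and get Stein-class membership from $G(0)=G(1)=0$. Both bound $\|f_h\|_\infty$ by taking the smaller of the two integral representations, and bound $\|wf_h'\|_\infty$ through $f_h'=g-s_pf_h$ using $\sup w=\tfrac14$ and $M_p$. Your $\|g\|_\infty\le 1$ (since $h$ and $\mathbb{E}_p[h]$ both lie in $[0,1]$) is sharper than the paper's $\|g\|_\infty\le 2$ and still gives the stated constant.
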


\begin{proof}
Fix any $h\in\mathcal H$ and define $g(c)=h(c)-\mathbb E_p[h(C)]$.
Since $\|h\|_\infty\le 1$, we have $\|g\|_\infty\le 2$.

We solve the ODE $f'(c)+s_p(c)f(c)=g(c)$.
Multiplying by $p(c)$ and using $p'(c)=s_p(c)p(c)$ yields
\[
(p(c)f(c))' = g(c)p(c).
\]
Integrating from $0$ to $c$ and choosing the integration constant to be $0$ gives
\[
p(c)f_h(c)=\int_0^c g(t)p(t)\,dt,
\qquad
f_h(c)=\frac{1}{p(c)}\int_0^c g(t)p(t)\,dt.
\]
Since $\int_0^1 g(t)p(t)\,dt=0$ by centering, we also have the equivalent form
\[
f_h(c)=-\frac{1}{p(c)}\int_c^1 g(t)p(t)\,dt,
\]
which proves \eqref{eq:fh-explicit}.
We have
\[
p(c)f_h(c)=\int_0^c g(t)p(t)\,dt\to 0\quad \text{as }c\downarrow 0,
\]
and
\[
p(c)f_h(c)=-\int_c^1 g(t)p(t)\,dt\to 0\quad \text{as }c\uparrow 1.
\]
Moreover, $\mathbb E_p[|f_h'|+|s_pf_h|]<\infty$ follows from below.
Hence $f_h\in\mathcal S(p)$.

We first bound $\|f_h\|_\infty$ using both representations and $|g|\le \|g\|_\infty$:
\[
|f_h(c)|
\le \frac{\|g\|_\infty}{p(c)}
\min\Big\{\int_0^c p(t)\,dt,\ \int_c^1 p(t)\,dt\Big\}
=\|g\|_\infty\frac{\min\{F(c),1-F(c)\}}{p(c)}.
\]
Thus $\|f_h\|_\infty\le K_p\|g\|_\infty$.

Next, from the Stein equation, $f_h'(c)=g(c)-s_p(c)f_h(c)$, so
\[
|w(c)f_h'(c)|
\le |w(c)g(c)| + |w(c)s_p(c)|\,|f_h(c)|.
\]
Since $\sup_{c\in(0,1)}w(c)=1/4$, we have $\|wg\|_\infty\le \frac14\|g\|_\infty$.
Also $\|ws_p\|_\infty\le M_p$ and $\|f_h\|_\infty\le K_p\|g\|_\infty$.
Therefore,
\[
\|w f_h'\|_\infty
\le \tfrac14\|g\|_\infty + M_pK_p\|g\|_\infty.
\]
Combining the two bounds yields \eqref{eq:stein-factor-bound-1d}, and in particular
$\|f_h\|_{\mathcal F_{\mathrm{int}}}<\infty$, so $f_h\in\mathcal F_{\mathrm{int}}$.

For every $h\in\mathcal H$ there exists $f_h\in\mathcal F_{\mathrm{int}}$ such that
$\mathcal A_p f_h = h-\mathbb E_p[h]$ and $\|f_h\|_{\mathcal F_{\mathrm{int}}}<\infty$.
Also, $f_h\in\mathcal S(p)$, hence $\mathcal F_{\mathrm{int}}\subseteq\mathcal S(p)$ holds for this choice of solutions.
This matches Definition~\eqref{eq:def-Pint} and completes the proof.
\end{proof}

Then we start the proof of \cref{lem:interior_stein_domination}.

\begin{lemma}
    Let $p$ be a target distribution and $q_\pi$ be a proposal distribution on $\mathcal{C}$. Let $h: \mathcal{C} \to \mathbb{R}$ be a test function of interest. Suppose there exists a solution $f_h$ to the Stein Equation $\mathcal{A}_p f_h(C) = h(C) - \mathbb{E}_{C \sim p}[h(C)]$ such that $f_h \in \mathcal{F}$ and its norm is bounded by a constant $B(h,p)$, i.e., $\|f_h\|_{\mathcal{F}} \le B(h,p)$. Then, the expectation of $h$ under $q_\pi$ is bounded by:
    \begin{align}
        \mathbb{E}_{C \sim q_\pi}[h(C)] \;\le\; \mathbb{E}_{C \sim p}[h(C)] + B(h,p) \cdot SD(q_\pi\|p),
    \end{align}
    where $SD(q_\pi\|p):=\sup_{f_h\in\mathcal{F},\Vert f_h\Vert_\mathcal{F}\leq 1}
\left\vert\mathbb{E}_{C\sim q_\pi}\big[(\mathcal{A}_p f_h)(C)\big]\right\vert.$ 
\end{lemma}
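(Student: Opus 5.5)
The argument rests on the Stein equation together with the homogeneity of the norm $\|\cdot\|_{\mathcal F}$. First fix $h$ and the solution $f_h\in\mathcal F$ of $\mathcal A_p f_h = h-\mathbb E_{C\sim p}[h(C)]$ guaranteed by hypothesis; in the one-dimensional interior setting this is supplied explicitly by Lemma~\ref{lem:A2-1d}, via \eqref{eq:fh-explicit} and the bound \eqref{eq:stein-factor-bound-1d}. Evaluate the Stein equation pointwise at $C\sim q_\pi$ and take expectations under $q_\pi$. Because $h$ is bounded (for $h\in\mathcal H_0$) and $\mathbb E_p[h]$ is a constant, this gives
\[
\mathbb E_{C\sim q_\pi}[h(C)] - \mathbb E_{C\sim p}[h(C)] \;=\; \mathbb E_{C\sim q_\pi}\big[(\mathcal A_p f_h)(C)\big].
\]
This identity needs $\mathcal A_p f_h$ to be $q_\pi$-integrable. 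That holds automatically, since $\mathcal A_p f_h = g$ is bounded by $2$.

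Next, normalize. If $B(h,p)=0$, then $\|f_h\|_{\mathcal F}=0$, so $f_h=0$ as an element of $\mathcal F$. Then $g=\mathcal A_p f_h=0$, meaning $h$ is $p$-a.s.\ constant on the relevant support, and the claim holds trivially. I will note this case and also handle it by a limiting argument. Otherwise set $\tilde f := f_h / B(h,p)$. Then $\|\tilde f\|_{\mathcal F}\le 1$, and by linearity of $\mathcal A_p$,
\[
\mathbb E_{q_\pi}[(\mathcal A_p f_h)(C)] = B(h,p)\,\mathbb E_{q_\pi}[(\mathcal A_p \tilde f)(C)] \le B(h,p)\,\big|\mathbb E_{q_\pi}[(\mathcal A_p \tilde f)(C)]\big| \le B(h,p)\, SD(q_\pi\|p).
\]
The last inequality holds because $\tilde f$ is feasible in the supremum defining $SD$. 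Combining this with the identity from the first step yields \eqref{eq:lem-stein-bound}. The one-sided bound comes from dropping the absolute value, and it in fact gives a two-sided bound $|\mathbb E_{q_\pi}h-\mathbb E_p h|\le B\,SD$.

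The main subtlety is not algebraic but one of well-definedness. The Stein identity $\mathbb E_p[\mathcal A_p f]=0$ is never used under $q_\pi$. It enters only to guarantee that $f_h$ solves the centered equation, so we need the solution to lie in the same class $\mathcal F$ over which $SD$ takes its supremum, and we need $\mathcal F$ to be a normed cone closed under positive scaling. In the paper's interior instantiation, Lemma~\ref{lem:A2-1d} provides exactly this: $f_h\in\mathcal F_{\mathrm{int}}\cap\mathcal S(p)$ with an explicit Stein factor. So I would cite it for existence and the bound, and state explicitly that $\mathcal F$ is taken to be a normed linear space, which makes the rescaling step legitimate.
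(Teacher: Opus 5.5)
Your proposal is correct and follows the paper's proof step for step: take the $q_\pi$-expectation of the Stein equation, rescale $f_h$ by $B(h,p)$ so it lies in the unit ball of $\mathcal F$, invoke the supremum defining $SD(q_\pi\|p)$, and multiply back by $B(h,p)$. Your additional handling of the degenerate case $B(h,p)=0$, of the $q_\pi$-integrability of $\mathcal A_p f_h$, and of the absolute value in $SD$ (which in fact gives the two-sided bound) are refinements that the paper's proof leaves implicit.
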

\begin{proof}
   By the definition of the solution $f_h$ to the Stein equation, for any $C \in \mathcal{C}$, we have:$$h(C) - \mathbb{E}_{C \sim p}[h(C)] = \mathcal{A}_p f_h(C)$$
    
   Taking the expectation of both sides with respect to the distribution $q_\pi$:$$\mathbb{E}_{C \sim q_\pi}\left[ h(C) - \mathbb{E}_{C \sim p}[h(C)] \right] = \mathbb{E}_{C \sim q_\pi}[\mathcal{A}_p f_h(C)]$$Since $\mathbb{E}_{C \sim p}[h(C)]$ is a constant with respect to $q_\pi$, we can separate the terms on the left side:$$\mathbb{E}_{C \sim q_\pi}[h(C)] - \mathbb{E}_{C \sim p}[h(C)] = \mathbb{E}_{C \sim q_\pi}[\mathcal{A}_p f_h(C)]$$
   
   We seek to bound the term $\mathbb{E}_{C \sim q_\pi}[\mathcal{A}_p f_h(C)]$. Recall the definition of the Stein Discrepancy: $$SD(q_\pi\|p) \;\triangleq\; \sup_{f \in \mathcal{F}, \|f\|_{\mathcal{F}} \le 1} \mathbb{E}_{C \sim q_\pi}[\mathcal{A}_p f(C)]$$
   This implies that for any function $g \in \mathcal{F}$ with $\|g\|_{\mathcal{F}} \le 1$, the expectation $\mathbb{E}_{C \sim q_\pi}[\mathcal{A}_p g(C)]$ is at most $SD(q_\pi\|p)$. The specific solution $f_h$ has norm $\|f_h\|_{\mathcal{F}} \le B(h,p)$. We can construct a normalized function $g$ as:$$g(C) = \frac{f_h(C)}{B(h,p)}$$ Then the norm of this new function can be written as: $$\|g\|_{\mathcal{F}} = \left\| \frac{f_h}{B(h,p)} \right\|_{\mathcal{F}} = \frac{1}{B(h,p)} \|f_h\|_{\mathcal{F}} \le \frac{B(h,p)}{B(h,p)} = 1$$Since $\|g\|_{\mathcal{F}} \le 1$, $g$ is a valid candidate for the supremum in the definition of $SD(q_\pi\|p)$. Therefore:$$\mathbb{E}_{C \sim q_\pi}[\mathcal{A}_p g(C)] \le SD(q_\pi\|p)$$
   
   Then substituting $g(C)$ back in terms of $f_h(C)$ and using the linearity of the Stein operator $\mathcal{A}_p,$ and taking the expectation implies $\mathbb{E}$:$$\mathbb{E}_{C \sim q_\pi}\left[\mathcal{A}_p \left( \frac{f_h(C)}{B(h,p)} \right)\right] \le SD(q_\pi\|p)$$$$\frac{1}{B(h,p)} \mathbb{E}_{C \sim q_\pi}[\mathcal{A}_p f_h(C)] \le SD(q_\pi\|p)$$Multiplying both sides by the positive constant $B(h,p)$:$$\mathbb{E}_{C \sim q_\pi}[\mathcal{A}_p f_h(C)] \le B(h,p) \cdot SD(q_\pi\|p)$$
   
   Finally, substitute this inequality back we can easily obtain $$\mathbb{E}_{C \sim q_\pi}[h(C)] - \mathbb{E}_{C \sim p}[h(C)] \le B(h,p) \cdot SD(q_\pi\|p)$$Rearranging terms yields the final certificate:$$\mathbb{E}_{C \sim q_\pi}[h(C)] \le \mathbb{E}_{C \sim p}[h(C)] + B(h,p) \cdot SD(q_\pi\|p)$$
\end{proof}

\subsection{Stein Inequality}

\begin{proposition}[Hybrid Stein certificate with boundary atoms]
\label{prop:hybrid_stein_two_atoms}
Suppose $h\in\mathcal H_0$. Let $q$ denote $q_\pi$ as the rollout distribution under policy $\pi$.  
Assume Lemma \ref{lem:interior_stein_domination} holds on $(0,1)$ for the pair $(h,p_{\mathrm{int}})$.
If $\lambda\ge\max\{1,B_{\mathrm{int}}\}$,
then
\begin{equation}
\label{eq:hybrid_two_atoms}
\mathbb{E}_{C\sim q}[h(C)]
\le
U.
\end{equation}

\end{proposition}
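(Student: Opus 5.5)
We decompose both $q$ and $p_{\mathrm{ref}}$ into their two boundary atoms and interior parts, then bound the resulting difference term by term. Here $U$ is the population version $U=\mathbb E_{p_{\mathrm{ref}}}[h]+\lambda\,\mathrm{SD}_{\mathrm{tot}}(q\|p_{\mathrm{ref}})$, using the $\lambda$-form of \eqref{eq:sd-tot}. Write
\[
\mathbb E_q[h]=a_0 h(0)+a_1 h(1)+w\,\mathbb E_{q_{\mathrm{int}}}[h],\qquad
\mathbb E_{p_{\mathrm{ref}}}[h]=a_0^\star h(0)+a_1^\star h(1)+w^\star\,\mathbb E_{p_{\mathrm{int}}}[h],
\]
with $a_i=a_i(q)$, $w=w(q)$ and $w^\star=1-a_0^\star-a_1^\star$.

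\textbf{Step 1: interior term.} Add and subtract $w\,\mathbb E_{p_{\mathrm{int}}}[h]$ and apply Lemma~\ref{lem:interior_stein_domination} to the pair $(h,p_{\mathrm{int}})$ on $(0,1)$. This gives
\[
w\,\mathbb E_{q_{\mathrm{int}}}[h]\le w\,\mathbb E_{p_{\mathrm{int}}}[h]+w\,B_{\mathrm{int}}\,\mathrm{SD}_{\mathrm{int}}(q_{\mathrm{int}}\|p_{\mathrm{int}}),
\]
and the error term is at most $\lambda\,w\,\mathrm{SD}_{\mathrm{int}}$ because $\lambda\ge B_{\mathrm{int}}$.

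\textbf{Step 2: remaining difference.} After Step 1 we still need
\[
\Delta:=(a_0-a_0^\star)h(0)+(a_1-a_1^\star)h(1)+(w-w^\star)\,\bar h_{\mathrm{int}}\;\le\;\lambda\,\mathrm{SD}_{\mathrm{disc}},
\]
where $\bar h_{\mathrm{int}}:=\mathbb E_{p_{\mathrm{int}}}[h]$. Since $h$ is nondecreasing with values in $[0,1]$, we have $h(0)\le \bar h_{\mathrm{int}}\le h(1)$. Substitute $w-w^\star=-(a_0-a_0^\star)-(a_1-a_1^\star)$ to get
\[
\Delta=(a_0-a_0^\star)\bigl(h(0)-\bar h_{\mathrm{int}}\bigr)+(a_1-a_1^\star)\bigl(h(1)-\bar h_{\mathrm{int}}\bigr).
\]
In the first product the second factor is $\le 0$, so the product is at most $(a_0^\star-a_0)_+\,(\bar h_{\mathrm{int}}-h(0))\le (a_0^\star-a_0)_+$. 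In the second product the second factor lies in $[0,1]$, so the product is at most $(a_1-a_1^\star)_+$. Hence $\Delta\le \mathrm{SD}_{\mathrm{disc}}\le\lambda\,\mathrm{SD}_{\mathrm{disc}}$, using $\lambda\ge 1$.

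\textbf{Step 3: combine.} Adding Steps 1 and 2 gives
\[
\mathbb E_q[h]\le \mathbb E_{p_{\mathrm{ref}}}[h]+\lambda\bigl(\mathrm{SD}_{\mathrm{disc}}+w\,\mathrm{SD}_{\mathrm{int}}\bigr)=U.
\]

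\textbf{Main obstacle.} The delicate point is Step 2. A naive bound $|a_i-a_i^\star|$ would not match the one-sided $\mathrm{SD}_{\mathrm{disc}}$; the signs only work out because monotonicity of $h$ gives $h(0)\le \bar h_{\mathrm{int}}\le h(1)$. A secondary point is to check that Lemma~\ref{lem:interior_stein_domination} is applied to the interior conditional $q_{\mathrm{int}}$, which is a genuine probability measure on $(0,1)$. When $w=0$ the term $w\,\mathrm{SD}_{\mathrm{int}}$ vanishes and Step 1 is vacuous.
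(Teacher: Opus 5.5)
Your proposal is correct and follows essentially the same route as the paper. Both decompose $q$ and $p_{\mathrm{ref}}$ into the two atoms plus the interior, apply Lemma~\ref{lem:interior_stein_domination} to $q_{\mathrm{int}}$, and rewrite the remainder using $w-w^\star=-(a_0-a_0^\star)-(a_1-a_1^\star)$. Both then use $h(0)\le \mathbb{E}_{p_{\mathrm{int}}}[h]\le h(1)$ to obtain the one-sided penalties and absorb constants via $\lambda\ge\max\{1,B_{\mathrm{int}}\}$. You read $U$ as the population certificate with unweighted $\mathrm{SD}_{\mathrm{disc}}+w\,\mathrm{SD}_{\mathrm{int}}$, which is what the paper's argument actually proves, and you treat the sign cases and the $w=0$ edge case more carefully than the paper does.
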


\begin{proof}
By the atom and interior decomposition,
\[
\mathbb{E}_{C\sim q}[h(C)]
=
a_0(q)h_0 + a_1(q)h_1 + w(q)\ \mathbb{E}_{C\sim q_{\mathrm{int}}}[h(C)],
\]
and
\[
\mathbb{E}_{C\sim p_{\mathrm{ref}}}[h(C)]
=
a_0^\star h_0 + a_1^\star h_1 + w^\star\ \mathbb{E}_{C\sim p_{\mathrm{int}}}[h(C)].
\]

Applying Lemma \ref{lem:interior_stein_domination} to $q_{\mathrm{int}}$ yields
\begin{equation*}
\begin{aligned}
   \mathbb{E}_{C\sim q}[h(C)]
&\le
a_0(q)h_0 + a_1(q)h_1
+
w(q)\ \mathbb{E}_{C\sim p_{\mathrm{int}}}[h(C)]
+ w(q)\ 
B_{\mathrm{int}}
\cdot
\mathrm{SD}_{\mathrm{int}}\\
&=\mathbb E_{C\sim p_{\mathrm{ref}}}[h(C)]
+ h_0(a_0(q)-a_0^\star) + h_1(a_1(q)-a_1^\star) \\
 & \qquad{ \;-\; \mathbb E_{C\sim p_{\mathrm{int}}}[h(C)]\,\big((a_0(q)-a_0^\star)+(a_1(q)-a_1^\star)\big)}
+ w(q)\,B_{\mathrm{int}}\,\mathrm{SD}_{\mathrm{int}}\\
&=\mathbb{E}_{C\sim p_{\mathrm{ref}}}[h(C)]
+
(h_0-{\mathbb E_{C\sim p_{\mathrm{int}}}[h(C)]})(a_0(q)-a_0^\star)
+
(h_1-{\mathbb E_{C\sim p_{\mathrm{int}}}[h(C)]})(a_1(q)-a_1^\star)\\
&+
w(q)\cdot B_{\mathrm{int}}
\cdot
\mathrm{SD}_{\mathrm{int}}\\
&\le\mathbb{E}_{C\sim p_{\mathrm{ref}}}[h(C)]
+
{(a_0^\star-a_0(q))_+}
+
(a_1(q)-a_1^\star)_+
+
w(q)\cdot B_{\mathrm{int}}
\cdot
\mathrm{SD}_{\mathrm{int}}.
\end{aligned}
\end{equation*}

The last inequality uses monotonicity of $h$, i.e.\ $h_0\le \mathbb{E}_{C\sim p_{\mathrm{int}}}[h(C)]\le h_1$, which makes the term multiplying $(a_0(q)-a_0^\star)$ nonpositive (yielding the one-sided $(a_0^\star-a_0(q))_+$) and bounds the remaining centering factors by $1$ since $h\in[0,1]$. Finally, using the fact that
$\mathrm{SD}_{\mathrm{tot}}= (a_0(q)-a_0^\star)_+
+
(a_1(q)-a_1^\star)_+
+w(q)\cdot \mathrm{SD}_{\mathrm{int}}$
and the condition for $\lambda$ gives
\[
(a_0(q)-a_0^\star)_+
+
(a_1(q)-a_1^\star)_+
+
w(q)\cdot
B_{\mathrm{int}}
\cdot
\mathrm{SD}_{\mathrm{int}}
\le\lambda\cdot \mathrm{SD}_{\mathrm{tot}},
\]
which yields \eqref{eq:hybrid_two_atoms}.
\end{proof}

\subsection{Bound for Empirical Error of Stein Certificate}

Given $n$ i.i.d.\ rollouts (or approximately independent mini-batch samples) producing $\{c_i\}_{i=1}^n\sim q$,
define empirical atom masses
\begin{equation}
\hat a_0 := \frac{1}{n}\sum_{i=1}^n \mathbf{1}\{c_i\le\tau_0\},\qquad
\hat a_1 := \frac{1}{n}\sum_{i=1}^n \mathbf{1}\{c_i\ge 1-\tau_1\},\qquad
\hat w:=1-\hat a_0-\hat a_1.
\label{eq:pi-hat}
\end{equation}
Let $\{c^{\mathrm{int}}_i\}_{i=1}^m\subset\{c^{}_i\}_{i=1}^n$ be the interior samples with $m=n_{\mathrm{int}}$.
Let $\widehat{SD}_{\mathrm{int}}$ be the KSD estimator on interior samples
with V-statistic of the Stein kernel after clipping away from the boundary.
Define the empirical discrepancy
\begin{equation}
\widehat{SD}_{\textrm{disc}}
:=
(a_0^\star-\hat a_0)_+ + (\hat a_1-a_1^\star)_+,
\qquad
\widehat{SD}_{\textrm{tot}}
:=
\widehat{SD}_{\textrm{disc}} + \hat w\cdot \widehat{SD}_{\mathrm{int}},
\label{eq:sd-hat}
\end{equation}
where
\[
\widehat{SD}_{\mathrm{int}}
:= \widehat{\mathrm{KSD}}_V^2
:= \frac{1}{m^2}\sum_{i=1}^m\sum_{j=1}^m u_{p_{\mathrm{int}}}(C_i^{\mathrm{int}},C_j^{\mathrm{int}}).
\]
and the empirical certificate
\begin{equation}
U
:=
\E_{p_{\safe}}[h(C)] + \lambda\,\widehat{SD}_{\textrm{tot}}.
\label{eq:U-hat}
\end{equation}
Here we precompute $\E_{p_{\safe}}[h(C)]$ by Monte-Carlo Sampling
is a fixed constant independent of $q$.

\begin{assumption}[Bounded Stein kernel on the clipped interior]
\label{ass:bounded-kernel}
After interior clipping, the induced Stein kernel $u_{p_{\mathrm{int}}}(x,y)$
used by the KSD estimator satisfies $|u_{p_{\mathrm{int}}}(x,y)|\le M_u$ for all clipped $x,y$.
\end{assumption}

\begin{proposition}[Concentration of the empirical certificate]
\label{prop:certificate-concentration}
Assume Assumption~\ref{ass:bounded-kernel} holds.
Then for any $\delta\in(0,1)$, with probability at least $1-\delta$,
\begin{equation}
\big|{U}-U_{\text{stein}}\big|
\;\le\;
\lambda\Big(
\epsilon_{\mathrm{int}}(m,\delta)
+\epsilon_w(n,\delta)\cdot (1+SD_{\mathrm{int}})\Big)
,
\label{eq:U-conc}
\end{equation}
where
\begin{equation}
\epsilon_w(n,\delta):=2\sqrt{\frac{\log(6/\delta)}{2n}},
\qquad
\epsilon_{\mathrm{int}}(m,\delta):=
2\sqrt{2}\,M_u\sqrt{\frac{\log(6/\delta)}{m}}.
\label{eq:ksd2-conc}
\end{equation}
\end{proposition}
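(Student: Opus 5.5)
Since $\E_{p_{\safe}}[h(C)]$ is a fixed constant shared by $U$ and $U_{\text{stein}}$, it cancels, and $|U-U_{\text{stein}}| = \lambda\,|\widehat{SD}_{\mathrm{tot}}-SD_{\mathrm{tot}}|$. I would then split the error along the decomposition \eqref{eq:sd-hat} into a discrete part and an interior part:
\[
|\widehat{SD}_{\mathrm{tot}}-SD_{\mathrm{tot}}|
\le |\widehat{SD}_{\mathrm{disc}}-SD_{\mathrm{disc}}|
+ |\hat w\,\widehat{SD}_{\mathrm{int}} - w\,SD_{\mathrm{int}}|.
\]
Here the population quantities $a_0,a_1,w$ are read as the masses of the thresholded events $\{c\le\tau_0\}$ and $\{c\ge 1-\tau_1\}$, matching \eqref{eq:pi-hat}. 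The interior term is further split as $\hat w\,|\widehat{SD}_{\mathrm{int}}-SD_{\mathrm{int}}| + |\hat w-w|\,SD_{\mathrm{int}}$. The confidence budget $\delta$ is divided into three pieces of $\delta/3$ each, one for each of $\hat a_0$, $\hat a_1$ and the KSD V-statistic. This accounts for the $\log(6/\delta)$ terms: each two-sided Hoeffding bound at level $\delta/3$ gives $\log(2\cdot 3/\delta)$.

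\textbf{Discrete part.} The maps $x\mapsto(x)_+$ are 1-Lipschitz, so $|\widehat{SD}_{\mathrm{disc}}-SD_{\mathrm{disc}}|\le|\hat a_0-a_0|+|\hat a_1-a_1|$. Hoeffding's inequality on the indicator averages gives, with probability at least $1-2\delta/3$, $|\hat a_j-a_j|\le\sqrt{\log(6/\delta)/(2n)}$ for $j=0,1$. Summing, the discrete part is at most $\epsilon_w(n,\delta)$. On the same event, $|\hat w-w|\le|\hat a_0-a_0|+|\hat a_1-a_1|\le\epsilon_w(n,\delta)$, so $|\hat w-w|\,SD_{\mathrm{int}}\le \epsilon_w\,SD_{\mathrm{int}}$. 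Together these produce the $\epsilon_w(1+SD_{\mathrm{int}})$ term.

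\textbf{Interior part.} Since $\hat w\le 1$, it suffices to bound $|\widehat{\mathrm{KSD}}^2_V - \mathrm{KSD}^2|$. Under Assumption~\ref{ass:bounded-kernel} the V-statistic is a function of $m$ interior samples with bounded differences. Changing one sample alters at most $2m-1$ of the $m^2$ entries, each by at most $2M_u$, so each coordinate moves the statistic by at most $4M_u/m$. McDiarmid's inequality then gives deviation from its mean of order $M_u\sqrt{\log(6/\delta)/m}$ with probability at least $1-\delta/3$. The bias of the V-statistic relative to the U-statistic is at most $2M_u/m$, which is absorbed into the stated constant $2\sqrt2 M_u$ for $m$ not too small. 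If that absorption fails, I would instead apply Hoeffding's bound for U-statistics at rate $\sqrt{\log/\lfloor m/2\rfloor}$ and accept a slightly adjusted constant. A union bound over the three events and multiplication by $\lambda$ then yields \eqref{eq:U-conc}.

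\textbf{Main obstacle.} The delicate step is the interior term. The interior samples form a random subset of size $m$ determined by the thresholding, so I would condition on the set of interior indices and on $m$. Given these, the interior samples are i.i.d.\ from the clipped interior conditional, and McDiarmid applies conditionally; integrating over the conditioning preserves the probability bound. A second subtlety is matching the certificate's $SD_{\mathrm{int}}$, which is the squared KSD, with the population quantity. I would define $SD_{\mathrm{int}}$ consistently as $\mathrm{KSD}^2$ of the clipped interior conditional, so that the V-statistic estimates exactly this quantity, up to the $O(1/m)$ bias handled above.
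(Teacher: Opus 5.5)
Your proof follows the paper's step for step. The reference term cancels, Hoeffding at level $\delta/3$ for each of $\hat a_0,\hat a_1$ produces the $\log(6/\delta)$, and the $1$-Lipschitz property of $(\cdot)_+$ together with $|\hat w-w|\le|\hat a_0-a_0|+|\hat a_1-a_1|$ gives the $\epsilon_w(n,\delta)(1+SD_{\mathrm{int}})$ term. McDiarmid with bounded differences $4M_u/m$ handles the V-statistic, and the split $\hat w(\widehat{SD}_{\mathrm{int}}-SD_{\mathrm{int}})+(\hat w-w)SD_{\mathrm{int}}$ with $\hat w\le 1$, plus a union bound, finishes. Two of your additions are refinements the paper omits. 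Reading $a_0,a_1$ as masses of the thresholded events fixes the paper's treatment of $\hat a_0$ as an estimator of $\Pr(C=0)$. Conditioning on the random interior index set fixes its treatment of $m$ as deterministic, and your conditioning argument is correct.

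The one step that does not close as written is the V-statistic bias. The paper also skips it: it bounds $|\widehat{SD}_{\mathrm{int}}-\E[\widehat{SD}_{\mathrm{int}}]|$ and then uses that as $|\widehat{SD}_{\mathrm{int}}-SD_{\mathrm{int}}|$. The bias is $\E[\widehat{SD}_{\mathrm{int}}]-SD_{\mathrm{int}}=\frac1m\bigl(\E\,u_{p_{\mathrm{int}}}(C,C)-SD_{\mathrm{int}}\bigr)\in[0,M_u/m]$, since the Stein kernel is positive definite, and it is nonzero in general. It cannot be absorbed into the constant $2\sqrt2\,M_u$. McDiarmid with $c_i\le 4M_u/m$ already produces exactly $2\sqrt2\,M_u\sqrt{\log(6/\delta)/m}$. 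Even the sharper difference bound $(2m-1)2M_u/m^2$ frees only $\sqrt2\,M_u\sqrt{\log(6/\delta)}\,m^{-3/2}$ of slack, which covers $M_u/m$ only when $m\le 2\log(6/\delta)$. That is the opposite of ``$m$ not too small''. The bias is harmless in the $\mathcal{O}(\cdot)$ version in the main text, but not with the explicit constants here.

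Your fallback is the right fix, and it costs nothing in the constant. Write
\[
\widehat{SD}_{\mathrm{int}}-SD_{\mathrm{int}}
=\tfrac{m-1}{m}\bigl(U_m-SD_{\mathrm{int}}\bigr)
+\tfrac1m\Bigl(\tfrac1m\textstyle\sum_{i}u_{p_{\mathrm{int}}}(c_i^{\mathrm{int}},c_i^{\mathrm{int}})-SD_{\mathrm{int}}\Bigr),
\]
with $U_m$ the unbiased U-statistic (for $m=1$ only the diagonal term remains).
\begin{itemize}
\item Hoeffding's U-statistic inequality (kernel range $2M_u$, $\lfloor m/2\rfloor$ blocks) at level $\delta/3$ gives $\tfrac{m-1}{m}|U_m-SD_{\mathrm{int}}|\le 2M_u\sqrt{\log(6/\delta)/m}$. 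This is a factor $\sqrt2$ better than McDiarmid.
\item The diagonal term is at most $M_u/m\le(2\sqrt2-2)M_u\sqrt{\log(6/\delta)/m}$, because $m\log(6/\delta)>\log 6>(2\sqrt2-2)^{-2}$.
\end{itemize}
The two pieces sum to at most $\epsilon_{\mathrm{int}}(m,\delta)$. With this replacement for the McDiarmid step, your argument proves the statement with the stated constants.
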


\begin{proof}

Each indicator in~\eqref{eq:pi-hat} is Bernoulli, hence by Hoeffding's inequality,
with probability at least $1-\delta/3$,
\[
|\hat a_0-a_0(q)|\le \sqrt{\frac{\log(6/\delta)}{2n}},
\qquad
|\hat a_1-a_1(q)|\le \sqrt{\frac{\log(6/\delta)}{2n}}.
\]
By the $1$-Lipschitz property of $(\cdot)_+$,
\[
\big|(a_0^\star-\hat a_0)_+-(a_0^\star-a_0(q))_+\big|\le |\hat a_0-a_0(q)|,
\quad
\big|(\hat a_1-a_1^\star)_+-(a_1(q)-a_1^\star)_+\big|\le |\hat a_1-a_1(q)|.
\]
Thus,
\[
|\widehat{SD}_{\textrm{disc}}-SD_{\textrm{disc}}(q\|p_{\safe})|
\le
|\hat a_0-a_0(q)|+|\hat a_1-a_1(q)|
\le
2\sqrt{\frac{\log(6/\delta)}{2n}}.
\]
Moreover, $\hat w-w(q)=-(\hat a_0-a_0(q))-(\hat a_1-a_1(q))$ implies
$|\hat w-w(q)|\le 2\sqrt{\frac{\log(6/\delta)}{2n}}$, yielding the first relation in~\eqref{eq:ksd2-conc}.

Consider the V-statistic estimator
$\widehat{SD}_{\mathrm{int}}=\frac{1}{m^2}\sum_{i,j=1}^m u_{p_{\mathrm{int}}}(c_i^{\mathrm{int}},c_j^{\mathrm{int}})$.
Changing one sample $c_i^{\mathrm{int}}$ affects at most $(2m-1)$ summands.
Since $|u_{p_{\mathrm{int}}}|\le M_u$, each affected summand changes by at most $2M_u$,
hence the bounded-differences constant satisfies
\[
c_i \le \frac{(2m-1)\cdot 2M_u}{m^2} \le \frac{4M_u}{m}.
\]
By McDiarmid's inequality, with probability at least $1-\delta/3$,
\[
\big|\widehat{SD}_{\mathrm{int}}-\mathbb{E}[\widehat{SD}_{\mathrm{int}}]\big|
\le \sqrt{\frac{\sum_{i=1}^m c_i^2}{2}\log\frac{6}{\delta}}
\le \sqrt{\frac{m\cdot(16M_u^2/m^2)}{2}\log\frac{6}{\delta}}
=
2\sqrt{2}\,M_u\sqrt{\frac{\log(6/\delta)}{m}},
\]
which yields the second relation in~\eqref{eq:ksd2-conc}.

Using~\eqref{eq:sd-hat}, triangle inequality and the fact that $\hat w\le 1$,
\[
|\widehat{SD}_{\textrm{tot}}-SD_{\textrm{tot}}|
\le
|\widehat{SD}_{\textrm{disc}}-SD_{\textrm{disc}}|
+
|\hat w-w|\cdot SD_{\mathrm{int}}
+
|\widehat{SD}_{\mathrm{int}}-SD_{\mathrm{int}}|.
\]
Multiplying by $\lambda$ gives~\eqref{eq:U-conc}.
A union bound over the three events completes the proof.
\end{proof}

\begin{lemma}
\label{lem:sd_lcb}
Assume the concentration results in Proposition \ref{prop:certificate-concentration} hold.
Define the computable lower confidence bound 
\begin{equation}
\label{eq:sd_lcb_def}
\underline{SD}_{\rm tot}
~:=~
\widehat{SD}_{\rm tot}
-\epsilon_{\rm int}(m,\delta)
-\epsilon_w(n,\delta)\Big(1+\widehat{SD}_{\rm int}+\epsilon_{\rm int}(m,\delta)\Big),
\qquad
\underline{SD}_{\rm tot}^+ := \max\{\underline{SD}_{\rm tot},0\}.
\end{equation}
Then with probability at least $1-\delta$,
\begin{equation}
\label{eq:sd_lcb_claim}
SD_{\rm tot} \;\ge\; \underline{SD}_{\rm tot}^+.
\end{equation}
\end{lemma}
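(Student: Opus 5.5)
The plan is to work on the high-probability event $\mathcal E$ of Proposition~\ref{prop:certificate-concentration}, which has probability at least $1-\delta$. On $\mathcal E$ we use the three component deviation bounds established in its proof, not only the aggregated bound~\eqref{eq:U-conc}:
\[
|\widehat{SD}_{\rm disc}-SD_{\rm disc}|\le \epsilon_w(n,\delta),\qquad |\hat w-w|\le \epsilon_w(n,\delta),\qquad |\widehat{SD}_{\rm int}-SD_{\rm int}|\le \epsilon_{\rm int}(m,\delta).
\]
Here $SD_{\rm int}$ is identified with $\mathbb E[\widehat{SD}_{\rm int}]$, as in that proof. Everything below is deterministic algebra on $\mathcal E$, so the claimed probability $1-\delta$ is inherited directly.

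The first step is to bound $\widehat{SD}_{\rm tot}-SD_{\rm tot}$ from above. Writing $\hat w\widehat{SD}_{\rm int}-wSD_{\rm int}=(\hat w-w)SD_{\rm int}+\hat w(\widehat{SD}_{\rm int}-SD_{\rm int})$ and using $\hat w\le 1$ gives
\[
\widehat{SD}_{\rm tot}-SD_{\rm tot}\le \epsilon_w(n,\delta)\bigl(1+SD_{\rm int}\bigr)+\epsilon_{\rm int}(m,\delta).
\]
The term $\epsilon_w(n,\delta)$ multiplying $1$ comes from the discrete part, and the term multiplying $SD_{\rm int}$ comes from the weight error.

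The only non-routine point is that this bound involves the unknown $SD_{\rm int}$, whereas the lower confidence bound must be computable. On $\mathcal E$ we have $SD_{\rm int}\le \widehat{SD}_{\rm int}+\epsilon_{\rm int}(m,\delta)$. Substituting this, which is valid because $\epsilon_w\ge 0$, yields
\[
SD_{\rm tot}\ge \widehat{SD}_{\rm tot}-\epsilon_{\rm int}(m,\delta)-\epsilon_w(n,\delta)\bigl(1+\widehat{SD}_{\rm int}+\epsilon_{\rm int}(m,\delta)\bigr)=\underline{SD}_{\rm tot}.
\]

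Finally, $SD_{\rm tot}\ge 0$, since both positive-part terms are nonnegative, $w\ge 0$, and $SD_{\rm int}$ is a supremum of absolute values (equivalently, a nonnegative squared KSD). Hence $SD_{\rm tot}\ge\max\{\underline{SD}_{\rm tot},0\}=\underline{SD}_{\rm tot}^+$, which is~\eqref{eq:sd_lcb_claim}.

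The main obstacle is bookkeeping rather than depth. We must use the same event $\mathcal E$ for all three deviations, keep the weights $\lambda$ out (the claim concerns $SD_{\rm tot}$ itself), and make sure the plug-in replacement of $SD_{\rm int}$ does not require a further union bound. It does not, because the bound on $SD_{\rm int}$ is already contained in $\mathcal E$.
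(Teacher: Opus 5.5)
Your proof is correct and follows the paper's argument: lower-bound $SD_{\mathrm{tot}}$ through the total deviation bound, substitute $SD_{\mathrm{int}}\le\widehat{SD}_{\mathrm{int}}+\epsilon_{\mathrm{int}}(m,\delta)$ to make the bound computable, and finish with $SD_{\mathrm{tot}}\ge 0$. The one difference is in your favor. The paper treats the total and interior deviation bounds as two separate events and records $\Pr(\mathcal{E})\ge 1-2\delta$, which does not match the lemma's stated $1-\delta$. You observe that both bounds already hold on the single event from the Proposition's proof, so no extra union bound is needed and $1-\delta$ is actually attained.
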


\begin{proof}
By Proposition \ref{prop:certificate-concentration}, with probability at least $1-\delta$,
\begin{align}
\label{eq:conc_tot}
\big|\widehat{SD}_{\rm tot}-SD_{\rm tot}\big|
&\le
\epsilon_{\rm int}(m,\delta)
+\epsilon_w(n,\delta)\big(1+SD_{\rm int}\big),
\\
\label{eq:conc_int}
\big|\widehat{SD}_{\rm int}-SD_{\rm int}\big|
&\le
\epsilon_{\rm int}(m,\delta).
\end{align}
Let $\mathcal{E}$ be the event on which both~\eqref{eq:conc_tot} and~\eqref{eq:conc_int} hold.
Then $\Pr(\mathcal{E})\ge 1-2\delta$.

On $\mathcal{E}$, inequality~\eqref{eq:conc_tot} implies
\begin{equation}
\label{eq:sd_tot_lb_step1}
SD_{\rm tot}
~\ge~
\widehat{SD}_{\rm tot}
-\epsilon_{\rm int}(m,\delta)
-\epsilon_w(n,\delta)\big(1+SD_{\rm int}\big).
\end{equation}
Moreover, from~\eqref{eq:conc_int} we have on $\mathcal{E}$ that
\begin{equation}
\label{eq:sd_int_ub}
SD_{\rm int}
~\le~
\widehat{SD}_{\rm int}+\epsilon_{\rm int}(m,\delta).
\end{equation}
Substituting~\eqref{eq:sd_int_ub} into~\eqref{eq:sd_tot_lb_step1} yields
\begin{align}
SD_{\rm tot}
&\ge
\widehat{SD}_{\rm tot}
-\epsilon_{\rm int}(m,\delta)
-\epsilon_w(n,\delta)\Big(1+\widehat{SD}_{\rm int}+\epsilon_{\rm int}(m,\delta)\Big)
~=~
\underline{SD}_{\rm tot}.
\end{align}
Since $SD_{\rm tot}\ge 0$ by definition, we further obtain
$SD_{\rm tot}\ge \max\{\underline{SD}_{\rm tot},0\}=\underline{SD}_{\rm tot}^+$.
This proves~\eqref{eq:sd_lcb_claim}.

\end{proof}

\subsection{Proof of Theorem \ref{thm:stein-cpo}}
\label{ap:stein-cpo}

Consider an episodic CMDP $\mathcal{M}$ with horizon $H$, state space $\mathcal{S}$, action space $\mathcal{A}$,
transition kernel $P(\cdot\mid s,a)$, and per-step cost $c:\mathcal{S}\times\mathcal{A}\to[0,1]$.
For a trajectory $\tau=(s_0,a_0,\dots,s_{H-1},a_{H-1})$ generated by policy $\pi$,
define the (unnormalized) cumulative cost
\[
C(\tau)\;:=\;\sum_{t=0}^{H-1} c(s_t,a_t).
\]
Let $h:\mathbb{R}\to\mathbb{R}$ be a measurable test function (e.g., sigmoid).
Our target functional is
\[
C_h^{\pi}(\mu_0)\;:=\;\E_{\tau\sim\pi}\big[h(C(\tau))\big].
\]
This is not additive in $(s_t,a_t)$ in general, hence performance difference lemma does not apply directly.

\paragraph{Augmented CMDP.}
We define an augmented CMDP $\bar{\mathcal{M}}$ whose state includes the running cumulative cost.
Let the augmented state space be
\[
\bar{\mathcal{S}} := \mathcal{S}\times \mathbb{R},
\qquad
\bar{s}_t := (s_t, z_t),
\qquad
\bar{\mu}_0:=({\mu}_0, z_0)
\]
where the auxiliary coordinate $z_t$ evolves deterministically as
\[
z_0:=0,\qquad z_{t+1} := z_t + c(s_t,a_t),\quad t=0,\dots,H-1.
\]
The augmented transition kernel is therefore
\[
\bar{P}\big((s',z')\mid (s,z),a\big)
=
P(s'\mid s,a)\cdot \mathbf{1}\{z'=z+c(s,a)\}.
\]

Define a terminal cost on the augmented state at time $H$:
\[
\bar{c}_H(\bar{s}_H) := h(z_H),
\]
and define per-step costs for $t=0,\dots,H-1$ as identically zero:
\[
\bar{c}_t(\bar{s}_t,a_t):=0,\qquad t=0,\dots,H-1.
\]
Let $\bar{V}_c^\pi(\bar\mu_0)$ denote the expected total cost under current policy $\pi$ in $\bar{\mathcal{M}}$:
\[
\bar{V}_c^\pi(\bar\mu_0) := \E_{\tau\sim \pi}\Big[\sum_{t=0}^{H-1}\bar{c}_t(\bar{s}_t,a_t) + \bar{c}_H(\bar{s}_H)\Big].
\]
By construction,
\begin{equation}
\label{eq:Jh_equals_augJ}
\bar{V}_c^\pi(\bar\mu_0)=\E_{\tau\sim\pi}[h(z_H)]=\E_{\tau\sim\pi}[h(C(\tau))]=C_h^\pi(\mu_0).
\end{equation}

\paragraph{Applying Performance Difference Lemma on the augmented CMDP.}
Now we can apply standard trust-region performance-difference bounds on $\bar{\mathcal{M}}$,
because $\bar{J}(\pi)$ is an expected return in the augmented CMDP.

Let $\pi,\pi'$ be two policies on $\mathcal{S}$ (equivalently on $\bar{\mathcal{S}}$ by ignoring $z$ in the action selection).
Define the value and $Q$ functions for the augmented MDP with respect to policy $\pi$:
\[
C_h^\pi(\bar{s}_t) := \E_{\pi}\Big[\bar{c}_H(\bar{s}_H)\,\big|\,\bar{s}_t\Big],\qquad
Q_h^\pi(\bar{s}_t,a) := \E_\pi\Big[\bar{c}_H(\bar{s}_H)\,\big|\,\bar{s}_t,a_t=a\Big],
\]
and the corresponding advantage
\[
A_h^\pi(\bar{s}_t,a):=Q_h^\pi(\bar{s}_t,a)-C_h^\pi(\bar{s}_t).
\]
Let $\bar{d}^\pi_t$ denote the state distribution of $\bar{s}_t$ under $\pi$ in the augmented CMDP
and $\bar{d}^\pi:=\frac1H\sum_{t=0}^{H-1}\bar{d}^\pi_t$.

By augmentation, we can apply trust-region performance-difference bounds on 
$\bar {\mathcal M}$
. We therefore consider the conservative surrogate program whose constraint can be rewritten as:
\begin{equation}
\begin{aligned}
    \label{eq:stein-tr}
&\underbrace{
C_h^{\pi_k}(\bar \mu_0)
+
\sum_{t=0}^{H-1}\mathbb{E}_{\bar s\sim \bar d^{\pi_k},a\sim\pi}\big[A^{\pi_k}_h(\bar s,a)\big]
}_{\text{CPO}}
+
\underbrace{m_k}_{\text{Stein margin}}
\le \varepsilon,
\;\;
\bar D_{KL}(\pi\|\pi_k)\le \delta,
\end{aligned}
\end{equation}
where the Stein margin is
\[
m_k\triangleq \lambda_k\cdot \underline{SD}_{\mathrm{tot}}^+(q_{\pi_k}\|p_{\mathrm{ref}})\;\;\ge 0.
\]
\medskip
\noindent
\begin{lemma}\citep{achiam2017constrained}
    For any $\pi,\pi'$,
\begin{equation}
\label{eq:achiam_aug}
C_h^{\pi'}(\bar \mu_0)-C_h^{\pi}(\bar \mu_0)
=
\bar V_c^{\pi'}(\bar \mu_0)-\bar V_c^{\pi}(\bar \mu_0)
\;\le\;
\sum_{t=0}^{H-1}\E_{\bar{s}\sim \bar{d}^\pi_t,\;a\sim\pi'(\cdot\mid s)}
\big[A_h^\pi(\bar{s},a)\big]
+
2H\,\epsilon_h^{\pi'}\cdot
\sum_{t=0}^{H-1}\E_{\bar{s}\sim \bar{d}^\pi_t}\Big[D_{\mathrm{TV}}\big(\pi'(\cdot\mid s),\pi(\cdot\mid s)\big)\Big]
\end{equation}
where the trust-region error term can be upper bounded by a total-variation penalty, e.g.,
\begin{equation}
\label{eq:tv_err}
D_{\mathrm{TV}}\big(\pi'(\cdot\mid s),\pi(\cdot\mid s)\big)=\frac12\sum_a\Big|\pi'(a\mid s)-\pi(a\mid s)\Big|,
\qquad
\epsilon_h^{\pi'}:=\max_{\bar{s}}\Big|\E_{a\sim\pi'(\cdot\mid s)}[A_h^\pi(\bar{s},a)]\Big|.
\end{equation}
\end{lemma}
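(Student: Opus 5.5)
The plan is to prove the bound from scratch on the augmented CMDP $\bar{\mathcal M}$, following the finite-horizon version of the argument of \citet{achiam2017constrained}. Because the horizon is finite, I will treat the time index as part of the augmented state, so that $C_h^\pi(\bar s_t)$, $Q_h^\pi$ and $A_h^\pi$ are well defined and time-indexed. The only reward in $\bar{\mathcal M}$ is the terminal cost $\bar c_H(\bar s_H)=h(z_H)$, with $h\in[0,1]$. The first equality in \eqref{eq:achiam_aug} is just \eqref{eq:Jh_equals_augJ} applied to both $\pi$ and $\pi'$, so all the work is in the inequality.

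\textbf{Step 1 (exact performance difference).} I telescope along a trajectory of $\pi'$. Write $C_h^{\pi}(\bar s_H)=h(z_H)$ and
\[
h(z_H)-C_h^\pi(\bar s_0)=\sum_{t=0}^{H-1}\big(C_h^\pi(\bar s_{t+1})-C_h^\pi(\bar s_t)\big).
\]
Taking expectations under $\pi'$, and using that $\E[C_h^\pi(\bar s_{t+1})\mid \bar s_t,a_t]=Q_h^\pi(\bar s_t,a_t)$ (the per-step costs are zero), gives
\[
C_h^{\pi'}(\bar\mu_0)-C_h^\pi(\bar\mu_0)=\sum_{t=0}^{H-1}\E_{\bar s\sim\bar d^{\pi'}_t,\,a\sim\pi'}\big[A_h^\pi(\bar s,a)\big].
\]

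\textbf{Step 2 (change of measure).} Set $f_t(\bar s):=\E_{a\sim\pi'(\cdot\mid s)}[A_h^\pi(\bar s,a)]$, so that $\|f_t\|_\infty\le\epsilon_h^{\pi'}$ by \eqref{eq:tv_err}. For each $t$,
\[
\E_{\bar d^{\pi'}_t}[f_t]\le \E_{\bar d^{\pi}_t}[f_t]+2\epsilon_h^{\pi'}\,D_{\mathrm{TV}}\big(\bar d^{\pi'}_t,\bar d^\pi_t\big),
\]
using $|\E_P f-\E_Q f|\le 2\|f\|_\infty D_{\mathrm{TV}}(P,Q)$ with $D_{\mathrm{TV}}$ defined as half the $L^1$ distance.

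\textbf{Step 3 (state-distribution drift; the main obstacle).} I need
\[
D_{\mathrm{TV}}\big(\bar d^{\pi'}_t,\bar d^\pi_t\big)\le\sum_{u=0}^{t-1}\E_{\bar s\sim\bar d^\pi_u}\big[D_{\mathrm{TV}}(\pi'(\cdot\mid s),\pi(\cdot\mid s))\big].
\]
I would prove this by induction on $t$. Write $\bar d^{\pi'}_{t+1}-\bar d^\pi_{t+1}$ as the sum of two pieces: the difference of the joint laws $\bar d_t\otimes\pi$ versus $\bar d_t\otimes\pi'$, both propagated through the common kernel $\bar P$, and a term in which the state marginals differ. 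A Markov kernel is a TV contraction, which gives
\[
\mathrm{TV}_{t+1}\le \mathrm{TV}_t+\E_{\bar d^\pi_t}\big[D_{\mathrm{TV}}(\pi',\pi)\big].
\]
The subtle points are two. First, the policies act only on $s$ and not on $z$, so the inner TV is exactly the one appearing in the statement. Second, the deterministic $z$-update is itself part of $\bar P$ and is therefore also a contraction. A coupling argument that agrees until the first action disagreement is a fallback if the kernel decomposition gets messy.

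\textbf{Step 4 (summing).} Summing Step 2 over $t$ and inserting Step 3, I bound
\[
\sum_{t=0}^{H-1}\sum_{u<t}\E_{\bar d^\pi_u}\big[D_{\mathrm{TV}}\big]\le H\sum_{u=0}^{H-1}\E_{\bar d^\pi_u}\big[D_{\mathrm{TV}}\big],
\]
since each $u$ appears at most $H$ times. Combined with Step 1, this yields \eqref{eq:achiam_aug} with constant $2H\epsilon_h^{\pi'}$.
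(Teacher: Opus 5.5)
Your proposal is correct and follows essentially the route the paper relies on. The paper gives no derivation of its own and only cites the CPO bound \citep{achiam2017constrained}, whose argument has the same three ingredients as yours: the performance-difference identity, a H\"older change of measure controlled by $\epsilon_h^{\pi'}$, and a bound on state-distribution drift in terms of the policy total variation. The cited result is discounted and infinite-horizon and bounds $\|d^{\pi'}-d^{\pi}\|_1$ through the resolvent $(I-\gamma P_{\pi'})^{-1}$, so your time-indexed augmented states and per-step TV-contraction induction are what actually justify the finite-horizon form stated here, with the factor $2H$ and the sum over $\bar d^\pi_t$.
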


Fix the current policy $\pi_k$ and let $\pi_{k+1}$ satisfy a trust-region constraint.
Applying \eqref{eq:achiam_aug}--\eqref{eq:tv_err} with $(\pi,\pi')=(\pi_k,\pi_{k+1})$ yields
\begin{equation}
\label{eq:one_step_aug_bound}
C_h^{\pi_{k+1}}(\bar \mu_0)
\;\le\;
C_h^{\pi_{k}}(\bar \mu_0)
+
\sum_{t=0}^{H-1}\E_{\bar{s}\sim \bar{d}^{\pi_k}_t,\;a\sim\pi_{k+1}(\cdot\mid s)}
\big[A_h^{\pi_k}(\bar{s},a)\big]
+
2H\,\epsilon_h^{\pi_{k+1}}\cdot
\sum_{t=0}^{H-1}\E_{\bar{s}\sim \bar{d}^\pi_k}\Big[D_{\mathrm{TV}}\big(\pi_{k+1}(\cdot\mid s),\pi_k(\cdot\mid s)\big)\Big].
\end{equation}

Define the usual surrogate objective on the augmented MDP:
\[
L_h(\pi_{k+1};\pi_k)
:=
C_h^{\pi_{k}}(\bar \mu_0)
+
\sum_{t=0}^{H-1}\E_{\bar{s}\sim \bar{d}^\pi_t,\;a\sim\pi_{k+1}(\cdot\mid s)}
\big[A_h^{\pi_k}(\bar{s},a)\big].
\]
Then \eqref{eq:one_step_aug_bound} becomes
\begin{equation}
\label{eq:Jh_le_surrogate_plus_TR}
C_h^{\pi_{k+1}}(\bar \mu_0)
\;\le\;
L_h(\pi_{k+1};\pi_k)
+
2H\,\epsilon_h^{\pi_{k+1}}\cdot
\sum_{t=0}^{H-1}\E_{\bar{s}\sim \bar{d}^\pi_k}\Big[D_{\mathrm{TV}}\big(\pi_{k+1}(\cdot\mid s),\pi_k(\cdot\mid s)\big)\Big].
\end{equation}

By \eqref{eq:stein-tr}, our update enforces a marginalized surrogate safety constraint
\[
L_h(\pi_{k+1};\pi_k)\le \varepsilon - m_{k}.
\]
Plugging this into \eqref{eq:Jh_le_surrogate_plus_TR} yields the one-step bound
\begin{equation}
\label{eq:stein_gate_one_step}
C_h^{\pi_{k+1}}(\bar \mu_0)
\;\le\;
d - m_{k}
+
2H\,\epsilon_h^{\pi_{k+1}}\cdot
\sum_{t=0}^{H-1}\E_{\bar{s}\sim \bar{d}^\pi_k}\Big[D_{\mathrm{TV}}\big(\pi_{k+1}(\cdot\mid s),\pi_k(\cdot\mid s)\big)\Big].
\end{equation}

Therefore, if the Stein margin satisfies 
\[
\lambda_k\ge \frac{2H\,\epsilon_h^{\pi_{k+1}}\cdot
\sum_{t=0}^{H-1}\E_{\bar{s}\sim \bar{d}^\pi_k}\Big[D_{\mathrm{TV}}\big(\pi_{k+1}(\cdot\mid s),\pi_k(\cdot\mid s)\big)\Big]}{\underline{SD}_{\rm tot}^+(q_{\pi_k}\|p_{\mathrm{ref}})}
\]
then the following relation holds with probability at least $1-2\delta$,
\[
C_h^{\pi_{k+1}}(\bar \mu_0)\;\le\;
\varepsilon.
\]

\section{Practical SteinGate}
\label{sec:practical}
In subsection \ref{subsec:hybrid_ref}, we provided a decomposition of the tail-risk functional into three components: (i) the reference risk under the hybrid safe distribution, (ii) a boundary mass discrepancy term, and (iii) an interior Stein discrepancy term weighted by the interior mass. This structure naturally suggests a plug-in estimator in which each quantity appearing on ~\eqref{eq:hybrid-certif} is approximated from finite rollout samples.

First, the reference risk $\mathbb{E}_{p_{\mathrm{ref}}}[h(C)]$ depends only on the chosen safe reference and can therefore be computed offline or by Monte Carlo sampling from the interior reference density $p_{\mathrm{int}}$. This term establishes a baseline level of admissible tail risk.

Second, the discrete discrepancy $SD_{\mathrm{disc}}(q \,\|\, p_{\mathrm{ref}})$ depends only on the boundary masses $a_0(q)$ and $a_1(q)$. In the bounded-cost setting, these quantities correspond to the probability that trajectories accumulate negligible cost or saturate at the maximal violation level. Estimating these probabilities reduces to counting the fraction of rollout samples falling within small neighborhoods of the endpoints. Importantly, this step preserves the asymmetry of the theoretical bound: only loss of safe mass and excess violation mass contribute to the certificate, while harmless shifts in the opposite direction are ignored.

Third, the interior discrepancy $SD_{\mathrm{int}}(q_{\mathrm{int}} \,\|\, p_{\mathrm{int}})$ is approximated using a kernelized Stein discrepancy relative to the interior reference density. This requires only a Stein operator for $p_{\mathrm{int}}$ and a function class over the interior. Restricting the discrepancy computation to the interior samples implements the conditional distribution $q_{\mathrm{int}}$, while the empirical interior mass $w(q)$ provides the corresponding weight. 

Combining these three empirical components yields a certificate of equation \ref{eq:hybrid-certif},
which is a direct numerical instantiation of the theoretical upper bound in Lemma~\ref{lem:interior_stein_domination}. The only approximation introduced is the replacement of population expectations and discrepancies by their sample analogues. As the number of rollout samples increases and the boundary neighborhoods shrink, these estimates converge to their population counterparts and the empirical certificate approaches the theoretical bound, up to the vanishing boundary approximation term $\varepsilon_{\mathrm{bdry}}$.

This construction ensures that the practical estimator remains structurally aligned with the theory: boundary mass shifts and interior distributional mismatch are controlled by separate terms with distinct gains, and both contribute additively to the final certificate. No additional modeling assumptions are introduced beyond those required by the hybrid Stein bound itself. 

Finally, we incorporate a lightweight empirical guard that replaces the Stein bound with the empirical risk estimate in regimes where the boundary discrepancy vanishes and the empirical risk is already below the reference level. This step does not alter the form of the certificate in the unsafe regime; it merely reduces variance when the distribution is clearly safe and the bound is nearly tight. The resulting procedure preserves the conservativeness of the Stein certificate while improving numerical stability in benign regimes. The implementation can be found at
\href{https://github.com/yassineCh/SteinGate}{https://github.com/yassineCh/SteinGate}.

\begin{algorithm}[h]
\caption{Hybrid Stein Certificate Estimator}
\label{alg:stein_estimator}
\begin{algorithmic}[1]
\REQUIRE Costs $\mathcal{C}$; Limit $L$; Reference params $\alpha, \beta, a^*$; Weights $\lambda_{\text{disc}}, \lambda_{\text{stein}}$; Risk budget $\varepsilon$
\ENSURE Certificate Bound $U$, Decision $d \in \{\textsc{Safe}, \textsc{Unsafe}\}$

\STATE \textbf{1. Normalization}
\STATE Map costs to $x \in [0, 1]$: $x_i \leftarrow \text{clamp}(c_i/L, 0, 1)$
\STATE Partition into sets: $X_0$ (zeros), $X_1$ (ones), $X_{\text{int}}$ (interior)
\STATE Compute masses: $\hat{a}_0 \leftarrow |X_0|/n, \quad \hat{a}_1 \leftarrow |X_1|/n$

\STATE \textbf{2. Reference Risk}
\STATE $h_{\text{ref}} \leftarrow \mathbb{E}_{z \sim \text{Beta}(\alpha, \beta)} [\sigma(z)]$ \hfill \COMMENT{Precomputed baseline}

\STATE \textbf{3. Compute Stein Penalties} 
\STATE $\mathcal{D}_{\text{disc}} \leftarrow \max(0, \hat{a}_1 - a_1^*) + \max(0, a_0^* - \hat{a}_0)$ \hfill \COMMENT{Mass mismatch}
\IF{$|X_{\text{int}}| > 1$}
    \STATE Compute KSD $\mathcal{D}_{\text{ksd}}$ on $X_{\text{int}}$ using score $\nabla \log \text{Beta}(\cdot; \alpha, \beta)$
\ELSE
    \STATE $\mathcal{D}_{\text{ksd}} \leftarrow 0$
\ENDIF

\STATE \textbf{4. Upper Bound Construction}
\STATE $U_{\text{stein}} \leftarrow h_{\text{ref}} + \lambda_{\text{disc}} \mathcal{D}_{\text{disc}} + \lambda_{\text{stein}} \mathcal{D}_{\text{ksd}}$

\STATE \textbf{5. Empirical Guard (Heuristic Override)}
\STATE $\hat{h}_{\text{emp}} \leftarrow \frac{1}{n} \sum \sigma(x_i)$
\IF{$\hat{a}_1 \approx 0$ \AND $\hat{h}_{\text{emp}} \le h_{\text{ref}}$}
    \STATE $U \leftarrow \hat{h}_{\text{emp}}$ \hfill \COMMENT{Trust empirical if trivially safe}
\ELSE
    \STATE $U \leftarrow U_{\text{stein}}$ \hfill \COMMENT{Trust Stein bound otherwise}
\ENDIF

\end{algorithmic}
\end{algorithm}

\section{Additional Results}
\label{sec:additional_results}
\subsection{Additional Benchmarks}
\begin{figure}[!htbp]
    \centering
    \includegraphics[width=\linewidth]{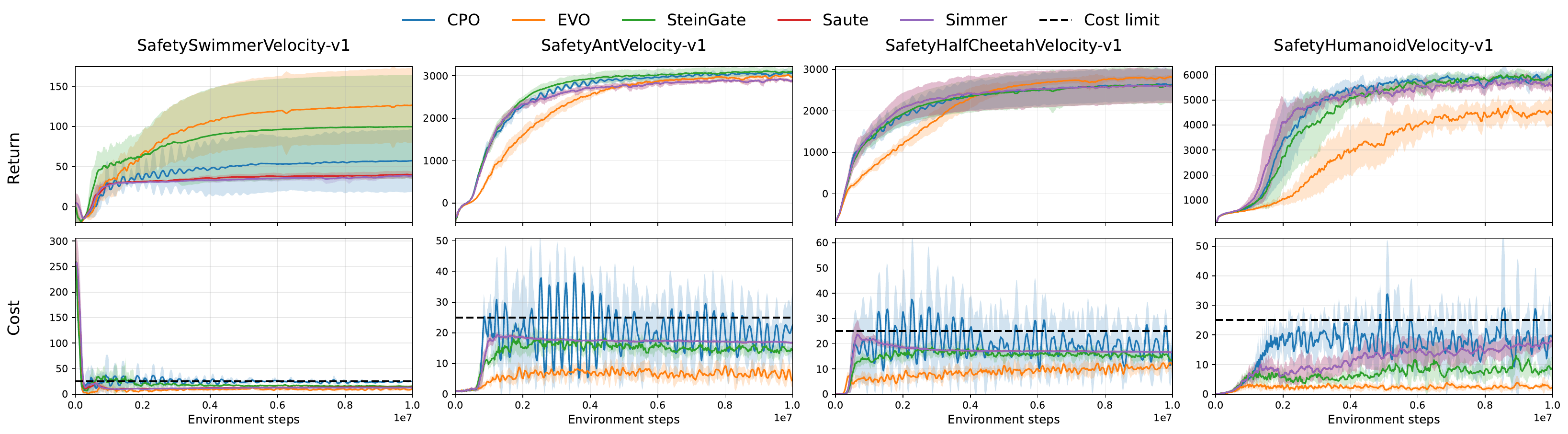}
    \caption{Performance on velocity benchmarks. The top row shows episodic return and the bottom row shows episodic cost during training. The dashed lines indicate the cost limit.}
    \label{fig:add_benchmarks}
\end{figure}

Figure \ref{fig:add_benchmarks} extends our evaluation to the Safety MuJoCo suite, which tasks agents with locomotion control under velocity-based constraints. Consistent with the navigation results, SteinGate demonstrates superior constraint adherence, rapidly converging to feasible policies while maintaining competitive forward velocities. In contrast, \textsc{CPO} suffers from significant instability, characterized by large cost oscillations and frequent safety violations, particularly in the complex dynamics of \texttt{Ant} and \texttt{HalfCheetah}. While \textsc{EVO} succeeds in maintaining safety, it again exhibits excessive conservatism, resulting in slower convergence and lower asymptotic returns, most notably on \texttt{Humanoid}. Although state-augmentation methods (\textsc{Saute}, \textsc{Simmer}) improve upon CPO's safety profile, they frequently operate precariously close to the boundary with intermittent violations. These results confirm SteinGate's robustness, demonstrating that its safety mechanism generalizes effectively from spatial navigation to velocity-constrained locomotion without sacrificing task performance.

\subsection{Additional Baselines}
\begin{figure}
    \centering
    \includegraphics[width=\linewidth]{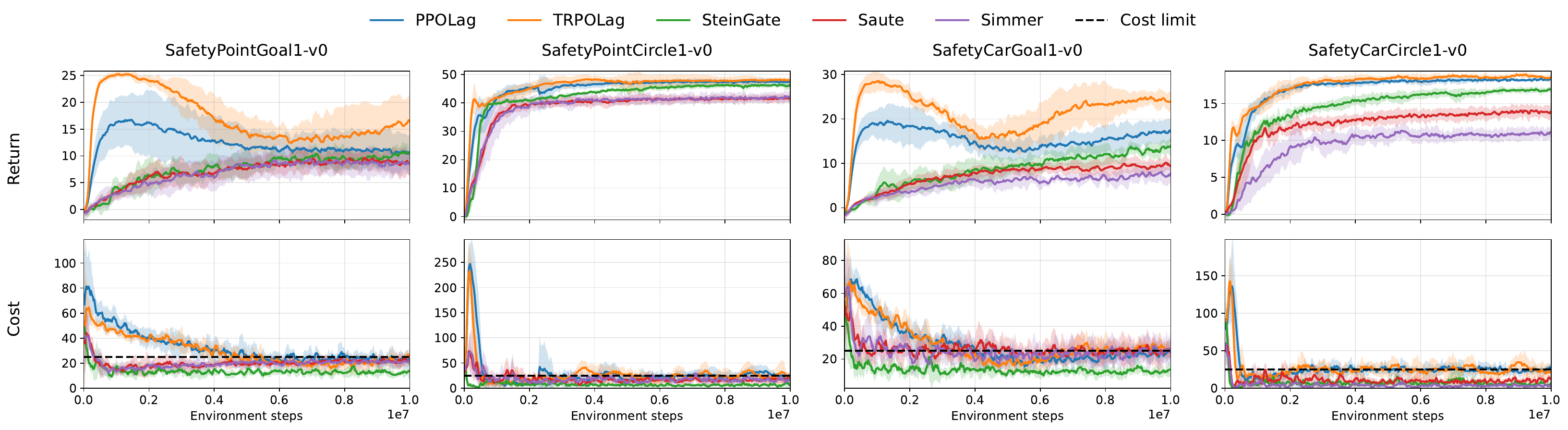}
    \caption{Comparison of SteinGate with additional baselines. The top row shows episodic return, and the bottom row shows episodic cost during training. The dashed lines indicate the cost limit.}
    \label{fig:add_baselines}
\end{figure}

Figure \ref{fig:add_baselines} expands our comparative analysis to include Lagrangian methods (PPO-Lag, TRPO-Lag) \citep{schulman2017proximal, schulman2015trust} and PPO-based implementations of state augmentation techniques (Saute-PPO, Simmer-PPO) \citep{sootla2022saute, sootla2022effects}. While the main text utilizes TRPO-based baselines to align with the backbones of CPO, EVO, and SteinGate, these additional comparisons isolate the impact of the underlying optimizer from the safety mechanism itself. The Lagrangian baselines prioritize reward maximization but fail to strictly enforce constraints, resulting in sustained operation above the cost limit. Although switching the backbone of Saute and Simmer to PPO reduces the magnitude of these violations compared to Lagrangian approaches, they still exhibit frequent excursions beyond the safety boundary. In contrast, SteinGate demonstrates superior constraint adherence without requiring the hyperparameter sensitivity often associated with Lagrangian multipliers, maintaining cost trajectories that tightly track the prescribed limit while delivering competitive task performance.

\begin{figure}[h]
    \centering
    \includegraphics[width=\linewidth]{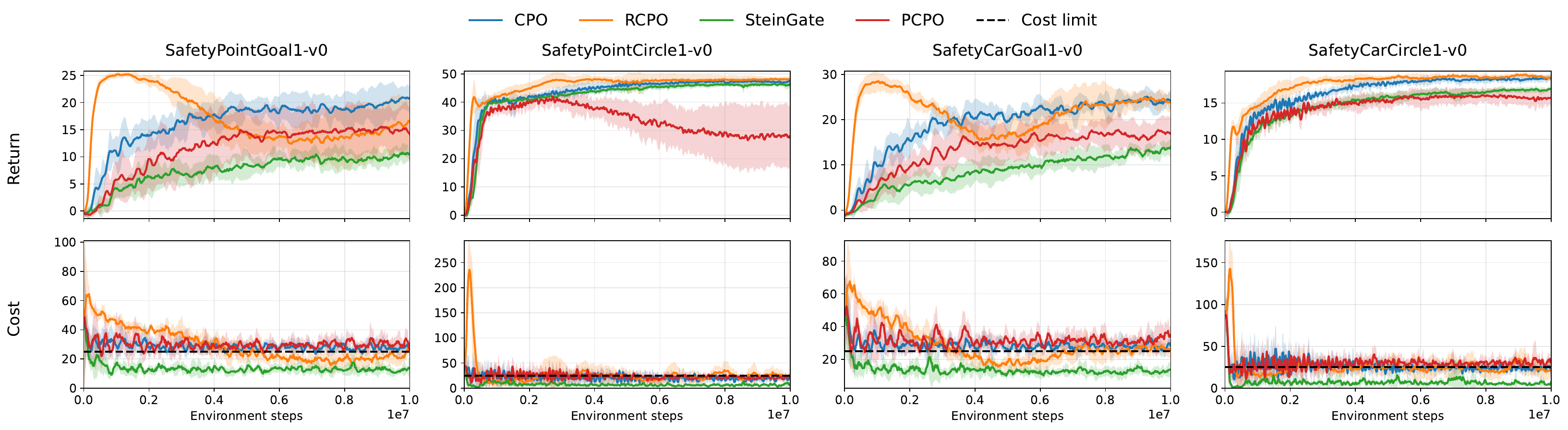}
    \caption{Comparison of SteinGate with CPO variants. The top row shows episodic return and the bottom row shows episodic cost during training. The dashed lines indicates the cost limit.}
    \label{fig:cpo_variants}
\end{figure}

\noindent \textbf{Comparison with CPO Variants.}
Figure \ref{fig:cpo_variants} compares SteinGate with standard CPO and two of its variants, \textsc{RCPO} \citep{tessler2018reward} and PCPO \citep{yang2020projection}, across four Safety Gymnasium tasks. Although all methods share the same trust-region constrained optimization backbone, their behaviors differ substantially. Standard CPO and \textsc{RCPO} achieve relatively high returns but exhibit sustained cost levels close to or above the constraint, indicating frequent violations during training. PCPO also fails to consistently satisfy the constraint and shows noticeable excursions above the cost limit across environments. In contrast, SteinGate maintains systematically lower episodic cost while achieving competitive return. These results indicate that the improvement provided by SteinGate is not simply due to the choice of backbone optimizer, but arises from the proposed Stein-based gating mechanism, which offers more reliable control of constraint satisfaction than alternative variants of CPO.

\section{Experimental Details}
\subsection{Sample Size and Training Dynamics}
\label{ap:sample_size}

Our experimental framework is grounded in the \texttt{OmniSafe} library \citep{ji2024omnisafe}. We implement SteinGate by extending the \texttt{OmniSafe} version of CPO. Similarly, we utilize standard \texttt{OmniSafe} implementations for baselines such as Saute and Simmer. For EVO, we adopt the official standalone implementation; however, we note that it is also constructed upon the omnisafe CPO. To ensure a strictly fair comparison, we maintained the default architectural hyperparameters and interaction budgets used for the baselines. In on-policy algorithms, the number of cost samples available for a safety update is determined by the \texttt{steps\_per\_epoch} (interaction budget per update) and the environment’s \texttt{episode\_length}. To accelerate training on our hardware, we used \texttt{vector\_env\_nums=8} (eight parallel environments). Given the task parameters, this resulted in an effective sample size of $N = 16$ cost episodes per epoch for Goal tasks (episode length 1000) and $N = 40$ for Circle tasks (episode length 500).

A core advantage of SteinGate is its ability to operate without the extensive replay buffers or importance resampling required by EVT-based methods such as EVO. To verify that the safety guarantees of SteinGate hold independently of the specific OmniSafe configuration, we conducted an ablation on \texttt{CarGoal1} by modulating the \texttt{steps\_per\_epoch} parameters to obtain batch sizes of $N \in \{8, 16, 32\}$.

It is important to note that varying $N$ in this framework implicitly changes the number of policy updates per total environment steps (e.g., larger batches imply fewer total updates for a fixed interaction budget). Despite this confounding factor, the results in paragraph~\ref{pg:sample_size_ablation} show SteinGate effectiveness. This ablation confirms that SteinGate is not brittle with respect to sample size and can be integrated into standard on-policy implementations (e.g., CPO) without requiring the collection of off-policy data to stabilize the safety estimator.

\subsection{Environment Descriptions}
\label{ap:envs_desc}

We evaluate SteinGate on a collection of continuous-control environments drawn from the \texttt{Safety-Gymnasium} benchmark suite, which is built on the MuJoCo simulator \citep{ray2019benchmarking, todorov2012mujoco}. These environments are designed to study reinforcement learning under explicit safety constraints by associating penalties with hazardous states or behaviors.

\noindent \textbf{Point and Car Tasks.}
The Point and Car agents are evaluated on both \textsc{Goal} and \textsc{Circle} variants. In the \textsc{Goal} tasks, the agent must navigate toward a sequence of target locations while avoiding hazardous regions in the environment. After each successful reach, a new goal position is sampled, requiring continual navigation under safety constraints. In the \textsc{Circle} tasks, the agent is encouraged to follow a predefined circular trajectory. Deviations outside the designated safe region incur costs, creating a trade-off between tracking accuracy and constraint satisfaction. The Car variants introduce additional control complexity through nonholonomic dynamics, making safe navigation more challenging than for the Point agent.

\noindent \textbf{Velocity-Constrained Locomotion Tasks.}
We additionally consider velocity-limited locomotion tasks for articulated agents, including \textsc{Ant}, \textsc{HalfCheetah}, \textsc{Swimmer}, and \textsc{Humanoid}. In these environments, the agent is rewarded for forward progress while being penalized for exceeding a specified velocity threshold. The safety constraint therefore requires the agent to coordinate its movement to achieve steady locomotion without violating speed limits. These tasks capture a different class of safety considerations, where constraints apply to the dynamics of motion rather than spatial regions.


\subsection{Hyperparameters}
\label{ap:hyperparameters}

All experiments were conducted using the default hyperparameter configurations provided by the \texttt{OmniSafe} library for the corresponding baseline algorithms. We did not perform task-specific tuning or additional hyperparameter searches. This choice ensures a fair and reproducible comparison and isolates the effect of the proposed SteinGate mechanism from confounding optimization or architectural choices. Unless otherwise specified, network architectures, learning rates, batch sizes, and interaction budgets follow the standard \texttt{OmniSafe} settings.

For SteinGate, the practical implementation introduces a number of parameters related to the construction of the Stein certificate. These parameters are shared across all tasks and are reported in Table~\ref{tab:steingate_hyperparams}.

For EVO, the official implementation does not fully specify several hyperparameters related to buffer usage and percentile thresholds. In particular, the following parameters were not documented in the released code: \texttt{use\_cost\_buffer}, \texttt{use\_reward\_buffer}, \texttt{cost\_buffer\_size}, \texttt{reward\_buffer\_size}, \texttt{initial\_distribution\_percentile}, \texttt{initial\_distribution\_reward\_percentile}, \texttt{cost\_buffer\_percentile}, \texttt{reward\_buffer\_percentile}, and \texttt{oversample\_factor}. We initialized a subset of these parameters based on the descriptions provided in the EVO paper and performed limited sweeps over the remaining unspecified values. From these runs, we selected the configuration that achieved the best trade-off between reward performance and constraint satisfaction. The final values used in our experiments are summarized in Table~\ref{tab:evo_hyperparams}.

\begin{table}[H]
\centering
\caption{Hyperparameters of the Hybrid Stein Certificate Estimator}
\label{tab:steingate_hyperparams}
\begin{tabular}{llll}
\toprule
\textbf{Category} & \textbf{Hyperparameter} & \textbf{Symbol} & \textbf{Default Value} \\
\midrule

\multicolumn{4}{l}{\textit{Reference Distribution}} \\
 & $\texttt{alpha}$ & $\alpha$ & 2.0 \\
 & $\texttt{beta}$ & $\beta$ & 5.0 \\

\multicolumn{4}{l}{\textit{Risk Mapping $\sigma(\cdot)$}} \\
 & $\texttt{u\_norm}$ & -- & 0.5 \\
 & $\texttt{eta}$ & -- & 0.02 \\

\multicolumn{4}{l}{\textit{Risk Budget}} \\
 & $\texttt{eps\_target}$ & $\varepsilon$ & 0.10 \\

\multicolumn{4}{l}{\textit{Stein Penalty Weights}} \\
 & $\texttt{discrete\_weight}$ & $\lambda_{\text{disc}}$ & 1.0 \\
 & $\texttt{stein\_factor}$ & $\lambda_{\text{stein}}$ & 0.10 \\

\multicolumn{4}{l}{\textit{KSD Stability}} \\
 & $\texttt{min\_bandwidth}$ & -- & 0.05 \\
 & $\texttt{interior\_clip}$ & -- & $1\times10^{-5}$ \\

\bottomrule
\end{tabular}
\end{table}

\begin{table}
\centering
\small
\caption{EVO hyperparameters not specified in the official implementation and selected in this work.}
\label{tab:evo_hyperparams}
\begin{tabular}{l c}
\toprule
\textbf{Hyperparameter} & \textbf{Value} \\
\midrule
\texttt{use\_cost\_buffer} & True \\
\texttt{use\_reward\_buffer} & True \\
\texttt{cost\_buffer\_size} & 50 \\
\texttt{reward\_buffer\_size} & 100 \\
\texttt{initial\_distribution\_percentile} & 0.5 \\
\texttt{initial\_distribution\_reward\_percentile} & 0.5 \\
\texttt{cost\_buffer\_percentile} & 95 \\
\texttt{reward\_buffer\_percentile} & 90 \\
\texttt{oversample\_factor} & 4 \\
\bottomrule
\end{tabular}
\end{table}


\end{document}